\def\BibTeX{{\rm B\kern-.05em{\sc i\kern-.025em b}\kern-.08em
    T\kern-.1667em\lower.7ex\hbox{E}\kern-.125emX}}
\newcommand{\Gr}{G}
\newcommand{\Vx}{V}
\newcommand{\Edge}{E}
\newcommand{\A}{\mathbf{A}}
\newcommand{\Lap}{\boldsymbol{L}}
\newcommand{\D}{\boldsymbol{D}}
\newcommand{\bpsi}{\boldsymbol{\psi}}
\newcommand{\Id}{\boldsymbol{I}}
\newcommand{\tr}{\text{tr}}
\newcommand{\fn}{\boldsymbol{f}}
\newcommand{\gn}{\boldsymbol{g}}
\newcommand{\M}{\boldsymbol{M}}
\newcommand{\B}{\boldsymbol{B}}
\newcommand{\Samp}{\Omega}
\newtheorem{Theorem}{Theorem}
\newtheorem{Proposition}{Proposition}
\newtheorem{Definition}{Definition}
\theoremstyle{empty}
\newtheorem{duplicate}{Theorem}
\begin{document}

\title{Bayesian Spectral Graph Denoising with Smoothness Prior
}
\makeatletter 
\newcommand{\linebreakand}{%
  \end{@IEEEauthorhalign}
  \hfill\mbox{}\par
  \mbox{}\hfill\begin{@IEEEauthorhalign}
}
\DeclareRobustCommand*{\IEEEauthorrefmark}[1]{%
  \raisebox{0pt}[0pt][0pt]{\textsuperscript{\footnotesize #1}}%
}
\author{\IEEEauthorblockN{Sam Leone\IEEEauthorrefmark{1$\dag$},
Xingzhi Sun\IEEEauthorrefmark{2$\dag$}, 
Michael Perlmutter\IEEEauthorrefmark{3} and
Smita Krishnaswamy\IEEEauthorrefmark{1,2,4,5,6,7,8}}
\IEEEauthorblockA{\small \IEEEauthorrefmark{1}Program for Applied Mathematics, Yale University \quad \IEEEauthorrefmark{2}Department of Computer Science, Yale University}
\IEEEauthorblockA{\small \IEEEauthorrefmark{3}Department of Mathematics, Boise State University \quad \IEEEauthorrefmark{4}Department of Genetics, Yale School of Medicine}

\IEEEauthorblockA{\small \IEEEauthorrefmark{5}Department of Genetics, Yale School of Medicine \quad \IEEEauthorrefmark{6}Wu Tsai Institute, Yale University \quad \IEEEauthorrefmark{7}FAIR, Meta AI}

\IEEEauthorblockA{\small \IEEEauthorrefmark{8}Computational Biology and Bioinformatics Program, Yale University \quad \IEEEauthorrefmark{$\dag$}Equal Contribution}}


\maketitle

\begin{abstract}
Here we consider the problem  of denoising features associated to complex data, modeled as signals on a graph, via a smoothness prior. This is motivated in part by settings such as single-cell RNA where the data is very high-dimensional, but its structure can be captured via an affinity graph.  This allows us to utilize ideas from graph signal processing. In particular, we present algorithms for the cases where the signal is perturbed by Gaussian noise, dropout, and uniformly distributed noise. The signals are assumed to follow a prior distribution defined in the frequency domain which favors signals which are smooth across the edges of the graph. By pairing this prior distribution with our three models of noise generation, we propose \textit{Maximum A Posteriori} (M.A.P.) estimates of the true signal in the presence of noisy data and provide algorithms for computing the M.A.P. Finally, we demonstrate the algorithms' ability to effectively restore signals from white noise on image data and from severe dropout in single-cell RNA sequence data.
\end{abstract}

\begin{IEEEkeywords}
denoising, graph signal processing, estimation
\end{IEEEkeywords}

\section{Introduction}
Signals defined on modern large-scale, irregularly structured  data sets are often corrupted by large amounts of noise such as measurement error or missing measurements. This motivates one to estimate the most likely \emph{true, uncorrupted} values of the signal based on both the noisy observations and their prior beliefs about the signal, which often takes the form of a smoothness assumption. We shall present an approach for producing such \emph{Maximum A Posteriori} (M.A.P.) estimates which utilizes tools from spectral graph theory. 

\par Our method is motivated by the explosion in recent decades of complex high-dimensional data, and associated signals, with very high noise levels. Such data may explicitly reside on a graph, e.g., social, energy, transportation, sensor, or neuronal networks\cite{shuman2013emerging}, or it may implicitly have relationships between entities from which a graph can be built, for example, physical and biological systems, text, image, time series \cite{zhou2020graph,moon2018manifold,van2018recovering,moon2019visualizing,coifman2006diffusion}
%
\par With the graph (either existing or built from data), we can treat features as signals (functions) on the graph, and apply methods in graph signal processing, especially spectral graph theory. 
Typically, a well-behaved signal defined on the vertices will take similar values at vertices that are more connected. This leads us to the prior that many functions of interest will be \emph{smooth} on the graph, where the concept of smoothness can be quantified using tools from spectral graph theory and the eigendecomposition of the graph Laplacian. 
This intuition motivates the following approach. First, we assume a priori that the signal of interest is likely ``fairly smooth" on the graph. Then, we model the noise of the observations. Finally, we produce an estimate of the true signal with the highest likelihood based on our prior beliefs and the observed measurements. Importantly, we note that the assumption that the signal is smooth is  very mild and we do not assume that the signal (or the data on which it is defined) has any specific form. We provide details on how to implement this approach under several noise models and then demonstrate the effectiveness of our method on real-world and synthetic data sets.
We also note that our method fills the gap of theoretical guarantees in the popular method MAGIC\cite{van2018recovering}, which it outperforms  due to the specific modeling of noise types.
\section{Background \& Related Work}\label{sec: notation}
\subsection{An example for high-dimensional data}
We first motivate our method via an example of denoising features associated to complex high-dimensional  data. Single-cell RNA sequence (scRNA-seq) provides high resolution information about gene expression and is of great interest in molecular biology, clinical studies, and biotechnology \cite{saliba2014single,hwang2018single}. scRNA-seq data is high-dimensional, as it measures the expression of tens of thousands of genes on up to millions of cells\cite{jovic2022single}, and  suffers from high noise levels due to multiple sources. Reducing this noise is a crucial step, which is needed prior to downstream analysis\cite{kolodziejczyk2015technology}.
\par In single-cell RNA sequence data, one obtains the gene-expression counts for a variety of genes in each cell. Each cell can then be viewed as a high-dimensional vector (whose $i$-th coordinate corresponds to the amount of gene $i$ expressed). It is a common practice to turn this data, consisting of  high-dimensional vectors (cells), into a graph by placing edges between cells which are close together in high-dimensional space, and viewing the expression of each gene, as a signal (function) defined on the cellular graph\cite{levine2015data,van2018recovering,butler2018integrating,wang2021scgnn}.
\subsection{Graph Signal Processing with Bayesian inference}
Spectral graph theory concerns itself with the distribution of eigenvalues and eigenvectors of 
matrices associated with graphs. The  set of eigenvalue-eigenvector pairs is known to uncover the geometric and algebraic properties of a graph. This is the observation that drives algorithms like spectral clustering and diffusion maps \cite{coifman2006diffusion}, the main intuition being that low frequency eigenpairs capture low-resolution, key information about a graph's structure.

Graph Signal Processing (GSP) utilizes tools from spectral graph theory to extend the Fourier transform from classical signal processing and time series analysis to the graph setting \cite{shuman2013emerging}. In the classical methods, 
signals can be denoised by mapping the signal to the Fourier domain, reducing the high-frequency component of the function, and inverting the Fourier 
transform to achieve a ``smoother" version of the signal. In much the same way, GSP operates by 
representing graph signals in a basis eigenvectors for the graph Laplacian (defined below), whose corresponding eigenvalues may be interpreted as (squared) frequencies, 
and then reducing the high-frequency components. Filtering in this manner has been applied to use cases such as 
point cloud data, biological imaging, sensor networks, and more \cite{ortega2018graph}. 

Bayesian inference is a fundamental method of parameter estimation. The typical form of this problem is that there is a random variable $\boldsymbol{x}$ drawn from some prior distribution and another random variable $\boldsymbol{y}$  whose distribution depends on $\boldsymbol{x}$. The ambition of Bayesian estimation is, given only $\boldsymbol{y}$, to estimate the underlying value of $\boldsymbol{x}$ using both prior information on $\boldsymbol{x}$ and the interaction between $\boldsymbol{x}$ and $\boldsymbol{y}$.
\par Notably, two important, nonstandard aspects of our method are: (1) we do not have any explicit prior on a data on which the signals is defined, but rather directly build the graph from the data (if it does not already exist), and treat it as a deterministic structure; (2) we do not assume the signal has any specific form, but rather use a mild prior of its smoothness on the graph. These distinctions free us from the limitations of Bayesian models caused by model misspecification\cite{liang2020model}, and make our method generally applicable to the vast range of data sets regardless of the data distributions. 

\subsection{MAGIC: Markov Affinity-based Graph Imputation of Cells}
\par MAGIC \cite{van2018recovering} is a commonly used method for denoising single-cell data. It is based on the idea that the high-dimensional data lies on a low-dimensional manifold, represented by a nearest neighbor graph. After building the graph, it uses data diffusion, which is random-walk on the graph to denoise the model. Its has been tremendously successful; however, it lacks a solid theoretical model. Our method fills this gap with GSP and Bayesian inference. Furthermore, by specifying cases of common noise models, we are able to adjust our model accordingly, allowing us to outperform MAGIC in these cases.
\subsection{Notation and Defininitions}
Throughout, we shall let $\Gr = (\Vx,\Edge,w)$ denote a weighted, connected, and undirected graph  with $|\Vx|=n$  and $|\Edge|=m$. Without loss of generality, we  will assume 
$\Vx = \{1,\ldots,n\}$. We shall refer to functions $\fn : \Vx \to \mathbb R$ as \emph{graph signals}. In a slight abuse of notation, we will not distinguish between $\fn$ and the vector in $\mathbb R^n$ whose $a$-th entry is $\fn(a)$, $1\leq a\leq n$. We shall let $\A$ denote the \emph{weighted} adjacency matrix and let $\D=\text{diag}(\A\boldsymbol{1})$ denote the corresponding diagonal degree matrix.

Given $\A$ and $\D$, the \emph{combinatorial Laplacian} is defined as $\Lap = \D - \A$.  Is is well-known that $\Lap$ admits an orthonormal basis of eigenvectors, $\Lap\bpsi_i=\lambda_i\bpsi_i,$ $1\leq i \leq n,$ where $\bpsi_1=\boldsymbol{1}$ and $0=\lambda_1<\lambda_2\leq\ldots\leq \lambda_n$. It follows that $\Lap$ is a positive semi-definite matrix whose null space is equal to $\textbf{span}\{\boldsymbol{1}\}$. One may compute that the quadratic form corresponding to $\Lap $ is given by $\fn^\top \Lap \fn = \sum_{\{a,b\} \in E} w(a,b) (\fn(a)-\fn(b))^2$. Thus, setting $\fn=\bpsi_i$, the $\lambda_i$ are interpreted as (squared) frequencies, representing the rate at which $\bpsi_i$ oscillates over the graph, and the $\bpsi_i$ are interpreted as \emph{generalized Fourier modes}, where $\widehat{\fn}(\lambda_i)=\langle \fn,\bpsi_i\rangle$ represents the portion of $\fn$ at frequency $\lambda_i$. Since the $\bpsi_i$ are an orthonormal basis, we have $\fn = \sum_{i=1}^n \widehat{\fn}(\lambda_i)\bpsi_i$. Therefore, for a real-valued function $h$, we can define a corresponding filter by $h(\fn) = \sum_{i=1}^n h(\lambda_i)\widehat{\fn}(\lambda_i)\bpsi_i$. 
 We shall let $\B$ denote the weighted $m\times n$ incidence matrix, where rows correspond to edges and columns to vertices, whose entries are given by  $\B(e,a) = - \B(e,b) = \sqrt{w(a,b)}$, if $e=(a,b)$ and $\B(e,v) = 0$ for all $v \not \in \{a,b\}$ \cite{spielman2012spectral}. One may verify that the Laplacian can be factored as $\Lap = \B^\top\B$. (Here, we implicitly assume an arbitrary, but fixed, ordering of each edge $(a,b)$. This arbitrary choice does not affect the identity $\Lap = \B^\top\B$ nor any of our analysis.)

We shall let $p(\fn)$ denote the probability distribution of a random variable $\fn$ and shall let $p(\fn|\gn)$ denote the conditional distribution of $\fn$ given another random variable $\gn$. We shall make use of the fact that by Bayes' theorem, $p(\fn|\gn)\propto p(\fn)p(\gn|\fn)$, where $\propto$ denotes proportionality and the implied constant depends on $\gn$.

\section{Methods}
Our goal is to estimate an unknown signal $\fn\in\mathbb{R}^n$ based on an  observation $\gn\in\mathbb{R}^n$, which we interpret as a noisy version of $\fn$ under various settings. In each case, we will assume that an observed $\gn$ is obtained from a corruption of a true signal $\fn$ which lies within a corresponding admissibility class $\Samp_{\gn}$. We shall then define the \emph{maximum a posteriori} estimate of $\fn$ to be the most likely value of $\fn$ based on (i) the fact that $\gn$ was observed and (ii) our \emph{a priori} beliefs on $\fn$ discussed in the following subsection. 
\subsection{A Prior Distribution Based on Smoothness}\label{sec: prior}

We define prior distributions on $\widehat{\fn}(\lambda_i)$ for  $i = 2,\ldots, n$, assuming that each $\widehat{\fn}(\lambda_i)$ follows the probability distribution:
\[ p_\kappa(\boldsymbol{\widehat{\fn}}(\lambda_i))  \propto \exp\big(-\kappa \lambda_i \boldsymbol{\widehat{f}}(\lambda_i)^2\big) \]
where $\kappa$ is a fixed smoothing parameter. We further assume that the  $\boldsymbol{\widehat{f}}(\lambda_i)$ are independent which implies that, for any fixed value of $\widehat{\fn}(\lambda_1)$,  the probability distribution of $\widehat{\fn}$ satisfies
\begin{align*} p_\kappa(\boldsymbol{\widehat{f}})  \propto \prod_{i=2}^n \exp\big(-\kappa \lambda_i \boldsymbol{\widehat{f}}(\lambda_i)^2\big)   = \exp\bigg( - \kappa \sum_{i=2}^n \lambda_i \boldsymbol{\widehat{f}}(\lambda_i)^2 \bigg).  \end{align*}
We then give $\fn$ the probability distribution defined by taking the inverse GFT of $\widehat{\fn}$.
We note that since the $\bpsi_i$ are an orthonormal eigenbasis and $\fn = \sum_{i=1}^n \widehat{\fn}(\lambda_i)\bpsi_i$,  we have
\[ p_\kappa(\fn) \propto \exp\bigg( - \kappa \sum_{i=2}^n \lambda_i \boldsymbol{\widehat{f}}(\lambda_i)^2 \bigg) = \exp \big(-\kappa \fn^\top \Lap \fn \big). \]
Therefore, we see that this probability distribution is defined so that the likelihood of $\fn$ decreases with its variation across the graph and $\kappa$ acts as a parameter controlling the tolerance towards fluctuation. 
Notably, we do not assume any prior distribution on $\widehat{\fn}(\lambda_1)$ (although is some cases $\widehat{\fn}(\lambda_1)$ will be implicitly constrained by the admissibility class $\Samp_{\gn}$). Therefore, our maximum a posteriori estimate is simply the most likely value of $\fn$ based on the $\gn$ and our prior beliefs about $\widehat{\fn}(\lambda_2),\ldots,\widehat{\fn}(\lambda_n)$. 

\subsection{Gaussian Noise on the Graph}
We first consider the setting where each of the Fourier coefficients is corrupted by Gaussian noise, i.e., $\widehat{\gn}(\lambda_i) = \boldsymbol{\widehat{f}}(\lambda_i) + z_i$, $2 \leq i \leq n$, where each $z_i \sim \mathcal N(0,\sigma^2)$ is an independent normal random variable. We will further assume that the total noise $\gn-\fn$ has zero mean, which motivates us to define the admissibility class $\Omega_{\gn} = \{ \fn : \boldsymbol{\widehat{f}}(\lambda_1) =  \widehat{\gn}(\lambda_1)\}$. By expanding the conditional and a priori densities and utilizing the fact that for a given $\gn$, we have $p_\kappa(\fn | \gn)\propto p_\kappa(\fn)p_\kappa(\gn|\fn)$, one may derive a maximum a posteriori estimate of $\fn$ given $\gn$\footnote{Further details on the derivation of Theorem \ref{thm: Gaussian}, and all of our other theoretical results, are available at \url{https://arxiv.org/abs/2311.16378}}. 

\begin{Theorem}[Gaussian Denoising]\label{thm: Gaussian}
    Let $\gn$ be given, and let $\Omega_{\gn} = \{ \fn : \boldsymbol{\widehat{f}}(\lambda_1) =  \widehat{\gn}(\lambda_1)\}$. As above, assume that $\widehat{\gn}(\lambda_i) = \boldsymbol{\widehat{f}}(\lambda_i) + z_i, 2\leq i\leq n$, $z_i\sim\mathcal{N}(0,\sigma^2)$ and that our prior beliefs on $\fn$ are as described in Section \ref{sec: prior}. Then, the maximum a posteriori likelihood estimate of $\fn$ given $\gn$ is, 
    \[ \fn_{\text{map}} = h(\gn), \]
    where $h(\gn)$ is a filter as described in Section \ref{sec: notation} with $h(\lambda_i) = \frac{1}{1 + 2\kappa \sigma^2 \lambda_i}$.
    Moreover, $\fn_{\text{map}}$ can be computed, to within  $\epsilon$ accuracy in the $\Lap$-norm ($\|\fn\|_{\Lap}^2=\fn^\top\Lap\fn$), in time $\tilde{\mathcal O}\big(m \log(\epsilon^{-1}) \min\big\{\sqrt{\log(n)}, \sqrt{\frac{2\kappa\sigma^2\lambda_{\max} + 1}{2\kappa\sigma^2\lambda_{\min} + 1}} \big\}\big)$.
\end{Theorem}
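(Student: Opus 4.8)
\emph{Proof strategy.} The plan is to establish the two assertions separately: first obtain the closed form for $\fn_{\text{map}}$ by maximizing the posterior coordinatewise in the graph Fourier domain, and then recast that estimator as the solution of a sparse, well-structured linear system so that known fast solvers apply.

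For the closed form, I would start from Bayes' theorem, $p_\kappa(\fn\mid\gn) \propto p_\kappa(\fn)\,p_\kappa(\gn\mid\fn)$, and expand both densities in the orthonormal basis $\{\bpsi_i\}$. Using the Gaussian likelihood $\widehat{\gn}(\lambda_i) = \widehat{\fn}(\lambda_i)+z_i$ and the prior of Section~\ref{sec: prior}, the log-posterior equals, up to an additive constant depending only on $\gn$,
\[
  -\sum_{i=2}^{n}\Big(\kappa\lambda_i\,\widehat{\fn}(\lambda_i)^2 + \tfrac{1}{2\sigma^2}\big(\widehat{\gn}(\lambda_i)-\widehat{\fn}(\lambda_i)\big)^2\Big).
\]
By orthonormality this objective decouples across $\widehat{\fn}(\lambda_2),\dots,\widehat{\fn}(\lambda_n)$, while on $\Samp_{\gn}$ the coordinate $\widehat{\fn}(\lambda_1)$ is pinned to $\widehat{\gn}(\lambda_1)$. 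Each summand is a strictly concave quadratic in $\widehat{\fn}(\lambda_i)$, so its first-order condition gives $\widehat{\fn}(\lambda_i)=\widehat{\gn}(\lambda_i)/(1+2\kappa\sigma^2\lambda_i)$; since $h(\lambda_1)=1$ reproduces the constraint on the first coordinate, the inverse GFT reassembles these into $\fn_{\text{map}}=h(\gn)$ with $h(\lambda_i)=(1+2\kappa\sigma^2\lambda_i)^{-1}$.

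For the complexity, note that $h(\Lap)=(\Id+2\kappa\sigma^2\Lap)^{-1}$, so $\fn_{\text{map}}$ is the unique solution of $\M\xb=\gn$ with $\M:=\Id+2\kappa\sigma^2\Lap$, a symmetric, strictly diagonally dominant matrix with $\mathcal O(m)$ nonzeros. This gives two routes. First, feed $\M\xb=\gn$ to a nearly-linear-time SDD solver, obtaining an $\epsilon$-accurate solution in $\tilde{\mathcal O}\big(m\sqrt{\log n}\,\log(\epsilon^{-1})\big)$ time. Alternatively, run conjugate gradient on $\M\xb=\gn$ from $\xb_0=\gn$: the initial residual $-2\kappa\sigma^2\Lap\gn$ lies in $\boldsymbol{1}^\perp$ and $\M$ preserves $\boldsymbol{1}^\perp$, so the error throughout lies in $\boldsymbol{1}^\perp$, on which $\M$ has condition number $\tfrac{2\kappa\sigma^2\lambda_{\max}+1}{2\kappa\sigma^2\lambda_{\min}+1}$ (with $\lambda_{\min}=\lambda_2$, $\lambda_{\max}=\lambda_n$); the standard CG bound then needs $\mathcal O\big(\sqrt{\tfrac{2\kappa\sigma^2\lambda_{\max}+1}{2\kappa\sigma^2\lambda_{\min}+1}}\,\log(\epsilon^{-1})\big)$ iterations, each a single $\mathcal O(m)$ sparse matrix--vector product. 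The bound $\|\xb\|_\Lap^2\le(2\kappa\sigma^2)^{-1}\|\xb\|_\M^2$ turns CG's $\M$-norm error into a $\Lap$-norm error at the cost of constants folded into the $\log(\epsilon^{-1})$ term, and taking the faster of the two routes yields the stated minimum.

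I expect the Fourier-domain optimization to be routine. The one delicate point in the complexity claim is the second branch: one must argue that CG's rate is governed by the spectrum of $\M$ restricted to $\boldsymbol{1}^\perp$ rather than on all of $\mathbb R^n$ --- which is precisely why the initialization $\xb_0=\gn$ is chosen and why the denominator $2\kappa\sigma^2\lambda_{\min}+1$ appears rather than simply $1$ --- together with citing the appropriate off-the-shelf SDD solver that supplies the $\sqrt{\log n}$ factor. The norm conversion and the bookkeeping of polylogarithmic factors hidden inside $\tilde{\mathcal O}$ are mechanical.
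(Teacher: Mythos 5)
Your proposal is correct, and its overall architecture matches the paper's: write the posterior via Bayes' rule, observe it is a strictly concave quadratic subject to the mean constraint, identify the maximizer with $(\Id+2\kappa\sigma^2\Lap)^{-1}\gn = h(\gn)$, and obtain the runtime by solving the sparse SDD system $(\Id+2\kappa\sigma^2\Lap)\xb=\gn$. The differences are in execution. For the MAP formula you optimize coordinatewise in the graph Fourier domain, where the objective decouples and $h(\lambda_1)=1$ automatically absorbs the constraint $\widehat{\fn}(\lambda_1)=\widehat{\gn}(\lambda_1)$; the paper instead works in the vertex domain, splitting $\fn$ into its mean part and the projection $\Pi\gn$ onto $\mathbf{1}^\perp$, setting the gradient to zero, and then re-diagonalizing $(\Id+2\kappa\sigma^2\Lap)^{-1}$ to recognize the filter --- your route is a bit more direct and avoids that last diagonalization step. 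For the complexity, the paper's appendix proof only establishes the $\tilde{\mathcal O}(m\sqrt{\log n}\log(\epsilon^{-1}))$ branch by appealing to the SDDM$_0$ solver of Cohen et al., leaving the conjugate-gradient branch of the stated minimum to a remark in the main text; you actually supply that argument, and the key point you flag --- initializing at $\xb_0=\gn$ so the error and Krylov space stay in $\mathbf{1}^\perp$, where the condition number is $\tfrac{2\kappa\sigma^2\lambda_{\max}+1}{2\kappa\sigma^2\lambda_{\min}+1}$ with $\lambda_{\min}=\lambda_2$ --- is exactly what is needed to justify the denominator in the theorem's bound, together with the inequality $\|\cdot\|_{\Lap}^2\le(2\kappa\sigma^2)^{-1}\|\cdot\|_{\M}^2$ for the norm conversion, which the paper does not spell out either. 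So your write-up is, if anything, slightly more complete than the paper's on the algorithmic half.
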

We note that the minimum in the term describing the time complexity arises from the existence of two possible methods of computation, both of which are algorithms for solving linear systems in an implicit matrix $\M$ with a condition number of $\beta = \frac{2\kappa\sigma^2\lambda_{\max} + 1}{2\kappa\sigma^2\lambda_{\min} + 1}$. When $2\kappa\sigma^2$ is small, $\beta$ is small and the conjugate gradient algorithm will terminate rapidly. Alternatively, when $\beta$ is large, one may use the solver from  \cite{cohen2014solving} which requires 
$\tilde{\mathcal O}\big(m \log(\epsilon^{-1}) \sqrt{\log(n)}\big)$ time.

 In practice, $\sigma^2$ and $\kappa$ are generally unknown. However, as the filter depends only on the product $2\kappa\sigma^2$, it suffices to estimate this quantity, which we denote by $\tau$. We propose a method of moments estimator which calculates the expectation of $\gn^\top \Lap \gn, \gn^\top \Lap^2 \gn$ in terms of $\sigma, \kappa$ and backsolves using the empirical values.  Alternatively, we may regard $\tau$ as a smoothing parameter to be tuned, rather than  a quantity needing estimation. 
\begin{equation}\label{gauss_strength} \tau \approx \frac{tr(\Lap) \gn^\top \Lap \gn - (n-1) (\Lap \gn)^\top  (\Lap \gn)}{tr(\Lap)  (\Lap \gn)^\top  (\Lap \gn)  - tr(\Lap^2) \gn^\top \Lap \gn} \end{equation}
Note that, by the handshake lemma, $\tr(\Lap) = \sum_{a \in \Vx}\deg(a)= 2\sum_{(a,b) \in \Edge}w(a,b)$. Furthermore, $\tr(\Lap^2) = \sum_{a}\big(\deg(a)^2 + \sum_{(a,b) \in \Edge}w(a,b)^2 \big) $, and so both of these quantities can be calculated in $\mathcal O(m)$ time. 
Alternatively, we may regard $\tau$ as a smoothing parameter to be tuned, rather than  a quantity needing estimation. We note that this filter may be viewed as a form of Tikhonov regularization \cite{shuman2013emerging}.

\subsection{Uniformly Distributed Noise} 

Next, we consider the case when the noise is a random uniform scaling in the vertex domain: $\gn(a) = u(a) \fn(a)$, where each $u(a) \sim \text{Unif}[0,1]$ is an independent uniform random variable. In this case, since $0\leq u(a)\leq 1$, we set the admissibility class $\Samp_{\gn}=\{\fn: |\fn(a)|\geq |\gn(a)|,  \text{sign}(\fn) = \text{sign}(\gn),  \forall a\in V\}$. For such an $\fn \in \Omega_{\gn}$, one may compute that the a posteriori likelihood of $\fn\in\Samp_{\gn}$ given $\gn$ is 
\[ p_\kappa(\fn|\gn) = p_\kappa(\fn) \prod_{a \in V} \frac{1}{|\fn(a)|}. \]
 We will maximize the a posteriori likelihood by minimizing the negative log likelihood, which using basic properties of the logarithm leads us to the optimization problem
\[  \min_{\fn \in \Omega_{\gn}}\mathcal{L}(\fn),\quad \mathcal{L}(\fn)= \kappa \fn^\top \Lap \fn + \sum_{a \in V} \log |\fn(a)|. \]
In order to (approximately) solve this problem, we adopt a constrained Convex-Concave Procedure (CCP) \cite{lipp2016variations} for the above. The CCP operates by splitting a function of the form $f(x) = f_{\text{concave}}(x) + f_{\text{convex}}(x)$ and approximating the concave portion linearly about the current solution; the relaxed problem is convex and can be solved more efficiently. The procedure is repeated until convergence, and it is known to be a descent algorithm. Applied this particular optimization, the CCP update of $\fn^{t+1}$ from $\fn^t$ is as follows:
    \[ \fn^{t+1} = \arg\min_{\fn \in \Omega_{\gn}} \kappa \fn^\top \Lap \fn + \sum_{a \in V} \frac{\fn(a)}{|\fn^t(a)|} \]
We remark that $\fn^{t+1}$ can be computed as a quadratic program and that the update provides a descent algorithm - $\mathcal L(\fn^{t+1}) \leq \mathcal L(\fn^t)$. This is because the loss function is a quadratic function of $\fn$ and the feasible region $\Omega_{\gn}$ is a convex polyhedron.
\subsection{Partial Observations \& Bernoulli Dropout}
In our final two models, we consider two settings where the noise behaves differently at different vertices. We assume that there is some (possibly unknown) set $S\subseteq V$ where $f(a)$ is exactly equal to $g(a)$ for all $a\in S$. We make no assumption regarding the relationship between $f(a)$ and $g(a)$ for $a \notin S$. This leads us to define the admissibility class $\Omega_{\gn} = \{\fn : \fn(a) = \gn(a) \text{ for all } a \in S \}$. We consider two practically useful variations of this problem: 
\begin{enumerate}
    \itemsep 0em 
    \item Basic Interpolation: The set $S$ is known. 
    \item Bernoulli Dropout: There is a ``set of suspicion'' $\zeta$ where we are unsure whether $a \in S$  or $a \in S^c$. There is also a (possibly empty) set $\zeta^c$ for which the observer is certain of their observations (i.e., we know $\zeta^c\subseteq S$). For each $a\in \zeta$, we assume that $a$ is corrupted (i.e., $a\notin S$) with probability $p$ and that $a\in S$ with probability $1-p$.
\end{enumerate}
In this first scenario, the maximum a posteriori estimate of $\fn$ is the most likely $\fn$ that is equal to $\gn$ over the observation set $S$: $\fn_{\text{map}} = \arg\max_{f \in \Omega_{\gn}} p_\kappa(\fn)$. Because of the monotonicity of the exponential function, this is equivalent to computing $\min_{f \in \Omega_{\gn}}\fn^\top \Lap \fn$. This problem was studied in \cite{zhu2003semi}, which proved the following result. Notably, \cite{zhu2003semi} predated the development of  efficient solvers which could be used to compute $\fn_{map}$ as in \eqref{eqn: fmap formula}. However, now that such solvers exist\cite{spielman2004nearly}, one may use them to compute the proposed estimate to accuracy $\epsilon$ in $\tilde{\mathcal O}(\widehat{m}\sqrt{\log\widehat{n}}\log(\epsilon^{-1}))$ time, where $\partial S$ is the boundary of $S$, $\widehat{n} = |S^c \cup \partial S|$ and $\widehat{m} = |E(S^c,S^c) \cup E(S^c, S)|$, where $E(S_1,S_2)$ denotes the set of edges going from $S_1 \subseteq \Vx$ to $S_2 \subseteq \Vx$. We also denote $\fn(A):=(\fn(a_1),\fn(a_2),\dots,\fn(a_k))$, where $\{a_1,a_2,\dots,a_k\}= A \subseteq V$; $\Lap(S_1,S_2)$ and $\A(S_1,S_2)$ are the restrictions of $\Lap$ and $\A$ to rows in $S_1$ and columns in $S_2$, respectively; $\B(:,S_1)$ is the restriction of $\B$ to columns in $S_1$; $\forall S_1, S_2\subseteq V$.

\begin{Theorem}[Restated from \cite{zhu2003semi}]\label{thm: S known}
    Suppose $S$ has at least one edge going to $S^c$. Then there exists a unique solution to $\min_{\fn \in \Omega}\fn^\top\Lap \fn$. The interpolation of $\fn$ to $S^c$ is given by
    \begin{equation}\label{eqn: fmap formula} \fn_{\text{map}}(S^c) = \Lap(S^c,S^c)^{-1}\A(S^c,S)\gn(S). 
    \end{equation}
\end{Theorem}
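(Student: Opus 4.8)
The plan is to recast the constrained problem $\min_{\fn\in\Omega_{\gn}}\fn^\top\Lap\fn$ as an \emph{unconstrained} convex quadratic program in the free block $\fn(S^c)$, solve it via its first-order optimality condition, and then simplify using the block structure of $\Lap$. First I would reorder the vertices so that $S^c$ precedes $S$ and write a generic $\fn\in\Omega_{\gn}$ as $\fn=(\fn(S^c),\gn(S))$. Expanding the Dirichlet form blockwise gives $\fn^\top\Lap\fn=\fn(S^c)^\top\Lap(S^c,S^c)\fn(S^c)+2\fn(S^c)^\top\Lap(S^c,S)\gn(S)+\gn(S)^\top\Lap(S,S)\gn(S)$; equivalently, using $\fn^\top\Lap\fn=\|\B\fn\|_2^2$ and $\B\fn=\B(:,S^c)\fn(S^c)+\B(:,S)\gn(S)$, it is the least-squares objective $\|\B(:,S^c)\fn(S^c)+\B(:,S)\gn(S)\|_2^2$ in the unknown $\fn(S^c)$.

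The crux of the argument is to show that $\Lap(S^c,S^c)$ is positive definite (equivalently, that $\B(:,S^c)$ has full column rank); this single fact delivers existence, uniqueness, and the invertibility needed for the closed form. I would prove it directly from the Dirichlet form: given $\xb\in\mathbb R^{|S^c|}$, extend it by zero on $S$ to $\tilde\xb\in\mathbb R^n$, so that $\xb^\top\Lap(S^c,S^c)\xb=\tilde\xb^\top\Lap\tilde\xb=\sum_{\{a,b\}\in\Edge}w(a,b)(\tilde\xb(a)-\tilde\xb(b))^2\ge 0$, with equality only when $\tilde\xb$ is constant on each connected component of $\Gr$. Since $\Gr$ is connected and $S$ is nonempty (the hypothesis that $S$ has an edge to $S^c$ guarantees both $S\ne\emptyset$ and $S^c\ne\emptyset$), constancy of $\tilde\xb$ together with $\tilde\xb|_S=0$ forces $\tilde\xb\equiv 0$, hence $\xb=0$. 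Thus $\Lap(S^c,S^c)\succ 0$, the objective is strictly convex in $\fn(S^c)$, and it therefore admits a unique minimizer.

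Finally I would obtain that minimizer from the stationarity condition $\Lap(S^c,S^c)\fn(S^c)+\Lap(S^c,S)\gn(S)=0$, i.e. $\fn_{\text{map}}(S^c)=-\Lap(S^c,S^c)^{-1}\Lap(S^c,S)\gn(S)$, and then note that $\Lap(S^c,S)=\D(S^c,S)-\A(S^c,S)=-\A(S^c,S)$ since $\D$ is diagonal and $S^c\cap S=\emptyset$, so the off-diagonal degree block vanishes. Substituting gives $\fn_{\text{map}}(S^c)=\Lap(S^c,S^c)^{-1}\A(S^c,S)\gn(S)$, as claimed. I expect the only nonroutine point to be the positive-definiteness step — in particular, making explicit use of the connectivity of $\Gr$, since without it (or, more generally, if $S$ failed to meet every connected component) $\Lap(S^c,S^c)$ could be singular and neither uniqueness nor the formula would survive.
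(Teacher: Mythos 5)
Your proof is correct, but it is worth noting that the paper itself never proves this statement: it is explicitly restated from the cited work of Zhu et al., and the appendix discussion surrounding it is devoted entirely to the \emph{runtime} of evaluating \eqref{eqn: fmap formula} (restricting attention to $S^c\cup\partial S$, writing $\Lap(S^c,S^c)=\tilde\Lap+\tilde\D$ as Laplacian-plus-diagonal so that an SDDM solver applies), not to existence, uniqueness, or the derivation of the formula. So your argument supplies the derivation the paper omits, and it is the standard one: eliminate the constraint by treating $\fn(S^c)$ as the only free block, establish $\Lap(S^c,S^c)\succ 0$ via the extension-by-zero Dirichlet-form argument (using the paper's standing assumption that $\Gr$ is connected, together with $S\neq\emptyset$), conclude strict convexity and hence a unique minimizer, and read off the closed form from stationarity plus $\Lap(S^c,S)=-\A(S^c,S)$. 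Your positive-definiteness step is in fact the same structural fact the paper exploits computationally ($\Lap(S^c,S^c)$ is the Laplacian of the subgraph induced by $S^c$ plus a nonzero, nonnegative diagonal coming from edges into $S$), so the two treatments are complementary: yours buys a self-contained correctness proof, the paper's buys the complexity bound. One small remark: the theorem's hypothesis that $S$ has an edge to $S^c$ is what matters in the general (possibly disconnected) setting of the original reference, where one needs every component of $S^c$ to meet $S$; under the paper's connectivity assumption your weaker use of ``$S\neq\emptyset$'' already suffices, and you correctly flag that without connectivity $\Lap(S^c,S^c)$ could be singular.
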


Now, we consider the second, more challenging scenario where we observe a signal $\gn$ which is equal to $\fn$, except in a set of suspicion $\zeta$ where, with probability $p$, $g(a)$ is corrupted (i.e., not equal to $f(a)$). Based on the observation $\gn$ alone, there is no obvious way to identify the set $S=\{a\in V: \fn(a)=\gn(a)\}$ (although we do know $\zeta^c\subseteq S)$. However, we  note that for $\gn$ to take a given value, there must be $\|\fn(\zeta)-\gn(\zeta)\|_0$ corrupted entries and $|\zeta|-\|\fn(\zeta)-\gn(\zeta)\|_0$ uncorrupted entries. Since each entry is corrupted with probability $p$, we model $p_\kappa(\gn|\fn)\propto p^{\|\fn(\zeta)-\gn(\zeta)\|_0}(1-p)^{|\zeta| - \|\fn(\zeta)-\gn(\zeta)\|_0}$. Thus, for $\fn\in\Samp_{\gn}$, the negative log of the posterior likelihood  of $\fn$ can be estimated as:
\begin{align*} -\log p_\kappa(\fn | \gn) &= \kappa \fn^\top \Lap \fn  \\ & +  \|\fn(\zeta) - \gn(\zeta)\|_0 \big(\log(1-p)-\log(p) \big)  \\ & + \text{constant}. \end{align*}
Therefore, if we define $\tau = \kappa^{-1}(\log(1-p)-\log(p))$, we observe the MAP is produced by the following minimization problem: 
\[ \fn_{\text{map}} \in \min_{\fn \in \Omega_{\gn}}\fn^\top \Lap \fn + \tau \|\fn(\zeta)-\gn(\zeta)\|_0. \]
Note that the sign of $\tau$ is going to depend on $\log(1-p)-\log(p) = \log(\frac{1}{p}-1)$. When $p < 1/2$, then the penalty term $\tau$ is positive; otherwise, $\tau < 0$ so we may assume all values have changed and estimate $\fn$ using Theorem \ref{thm: S known}. When $p < 1/2$, the penalty term is positive. By breaking up $\fn$ into $\fn(\zeta)$ and $\fn(\zeta^c)$, we may write the optimization as a regression problem:
\begin{Theorem}
When $p < 1/2$, the $\fn_{\text{map}}$ is the $\arg\min$ of the following sparse regression problem: 
        \begin{align*} \fn_{\text{map}}(\zeta) & \in   \gn(\zeta) + \arg\min_{\boldsymbol{x}} \big\{ \tau\|\boldsymbol{x}\|_0  \\
    & + \|\B(:,\zeta)\boldsymbol{x} - \B(:,\zeta^c)\gn(\zeta^c) + \B(:,\zeta)\gn(\zeta)\|_2^2 \big\}.  \end{align*}
And when $p \geq 1/2$, the solution is given by, 
\[ \fn_{\text{map}}(\zeta) = \Lap(\zeta^c,\zeta^c)^{-1}\A(\zeta^c, \zeta)\gn(\zeta^c). \]
\end{Theorem}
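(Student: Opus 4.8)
The plan is to start from the minimization problem already derived above,
\[ \fn_{\text{map}} \in \arg\min_{\fn \in \Omega_{\gn}} \fn^\top \Lap \fn + \tau \|\fn(\zeta)-\gn(\zeta)\|_0, \]
where, since $\zeta^c \subseteq S$ is all we know for certain, the feasible set effectively enforces only $\fn(a)=\gn(a)$ for $a \in \zeta^c$ (membership of the remaining vertices in $S$ is handled by the soft $\ell_0$ term), and to split into the two sign regimes of $\tau$. In both regimes the only free block of coordinates is $\fn(\zeta)$, so the common move is to write $\fn(\zeta) = \gn(\zeta) + \xb$, noting $\|\fn(\zeta)-\gn(\zeta)\|_0 = \|\xb\|_0$.

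\textbf{Case $p \geq 1/2$ (so $\tau \leq 0$).} First I would show one may declare \emph{all} of $\zeta$ corrupted. For a putative corruption pattern $T \subseteq \zeta$, let $Q^\star(T) = \min\{\fn^\top\Lap\fn : \fn(a)=\gn(a)\ \forall a \in \zeta^c \cup (\zeta \setminus T)\}$. If $T \subseteq T'$ the feasible set for $T'$ contains that for $T$, so $Q^\star(T') \leq Q^\star(T)$; and $\tau \leq 0$ gives $\tau|T'| \leq \tau|T|$. Hence $Q^\star(\zeta) + \tau|\zeta| \leq Q^\star(T) + \tau|T|$ for every $T$, and the outer minimum over the (finitely many) support patterns is attained at $T=\zeta$. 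The problem thus reduces to $\min\{\fn^\top\Lap\fn : \fn(a)=\gn(a)\ \forall a \in \zeta^c\}$, which is precisely the interpolation problem of Theorem \ref{thm: S known} with observation set $S = \zeta^c$; provided $\zeta^c$ has at least one edge to $\zeta$ (automatic from connectedness unless $\zeta = V$), that theorem yields the stated closed form for $\fn_{\text{map}}(\zeta)$.

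\textbf{Case $p < 1/2$ (so $\tau > 0$).} Here I would use the factorization $\Lap = \B^\top \B$, so $\fn^\top\Lap\fn = \|\B\fn\|_2^2$. Partitioning the columns of $\B$ according to $\zeta$ and $\zeta^c$ and using $\fn(\zeta^c)=\gn(\zeta^c)$, $\fn(\zeta)=\gn(\zeta)+\xb$, one gets $\B\fn = \B(:,\zeta)\xb + \big(\B(:,\zeta)\gn(\zeta) + \B(:,\zeta^c)\gn(\zeta^c)\big)$, where the parenthesized vector is a fixed offset independent of $\xb$. Substituting into the objective turns it into $\|\B(:,\zeta)\xb + c\|_2^2 + \tau\|\xb\|_0$ for that fixed $c$, which, after collecting the offset, is the displayed best-subset-selection problem; then $\fn_{\text{map}}(\zeta) = \gn(\zeta) + \xb^\star$ for any minimizer $\xb^\star$, and $\fn_{\text{map}}(\zeta^c)=\gn(\zeta^c)$ by feasibility.

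\textbf{Main obstacle.} The $p<1/2$ case is essentially bookkeeping: the only real content is that $\ell_0$ is invariant under the affine substitution and that $\Lap=\B^\top\B$ converts the Laplacian form into a least-squares residual, after which one collects the constant blocks (being careful with the sign convention in the definition of $\B$). The delicate step is the reduction in the $p \geq 1/2$ case — one must justify rigorously, rather than merely assert, that a nonpositive penalty forces the optimal corruption set to be all of $\zeta$, which is exactly the monotonicity argument for $Q^\star(\cdot)$ above, and then check the nondegeneracy hypothesis needed to invoke Theorem \ref{thm: S known}. A minor but worth-stating point is that the objective is bounded below ($\fn^\top\Lap\fn \geq 0$ and $\tau\|\fn(\zeta)-\gn(\zeta)\|_0 \geq \tau|\zeta|$), so the finite case analysis over support patterns is legitimate and no minimizer escapes to infinity.
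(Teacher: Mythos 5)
Your proposal is correct and follows essentially the same route as the paper's appendix: the $p<1/2$ case via $\Lap=\B^\top\B$, the block split over $\zeta,\zeta^c$, the substitution $\fn(\zeta^c)=\gn(\zeta^c)$, $\fn(\zeta)=\gn(\zeta)+\xb$ with $\|\xb\|_0$ unchanged; and the $p\geq 1/2$ case by observing the nonpositive penalty lets one declare all of $\zeta$ corrupted and invoke the harmonic-interpolation result with $S=\zeta^c$. Your explicit monotonicity argument for $Q^\star(T)$ is just a slightly more careful rendering of the paper's one-sentence justification of that reduction, so the approaches coincide.
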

\begin{figure}
\centering
    \includegraphics[width=0.15\linewidth]{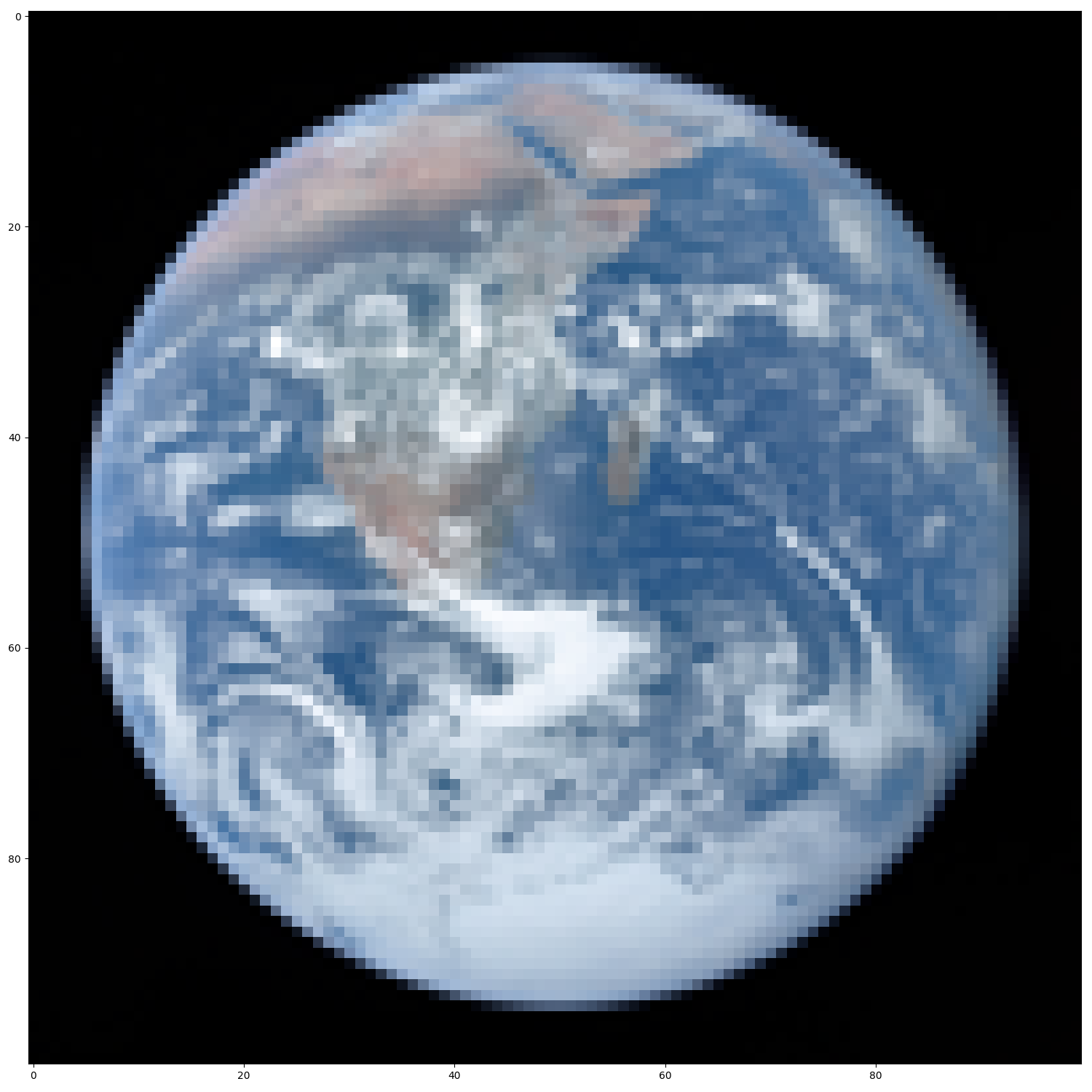}
    \includegraphics[width=0.15\linewidth]{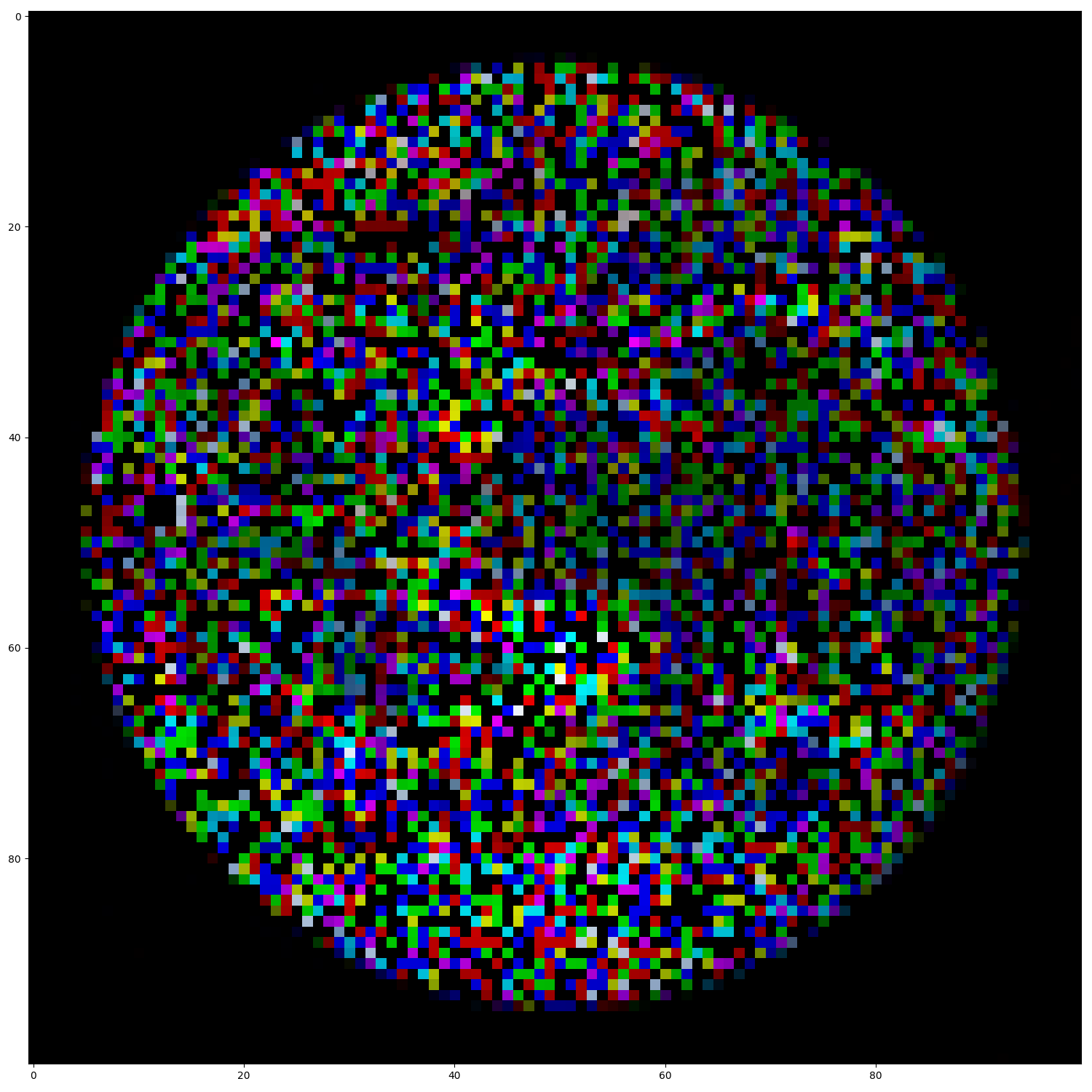}
    \includegraphics[width=0.15\linewidth]{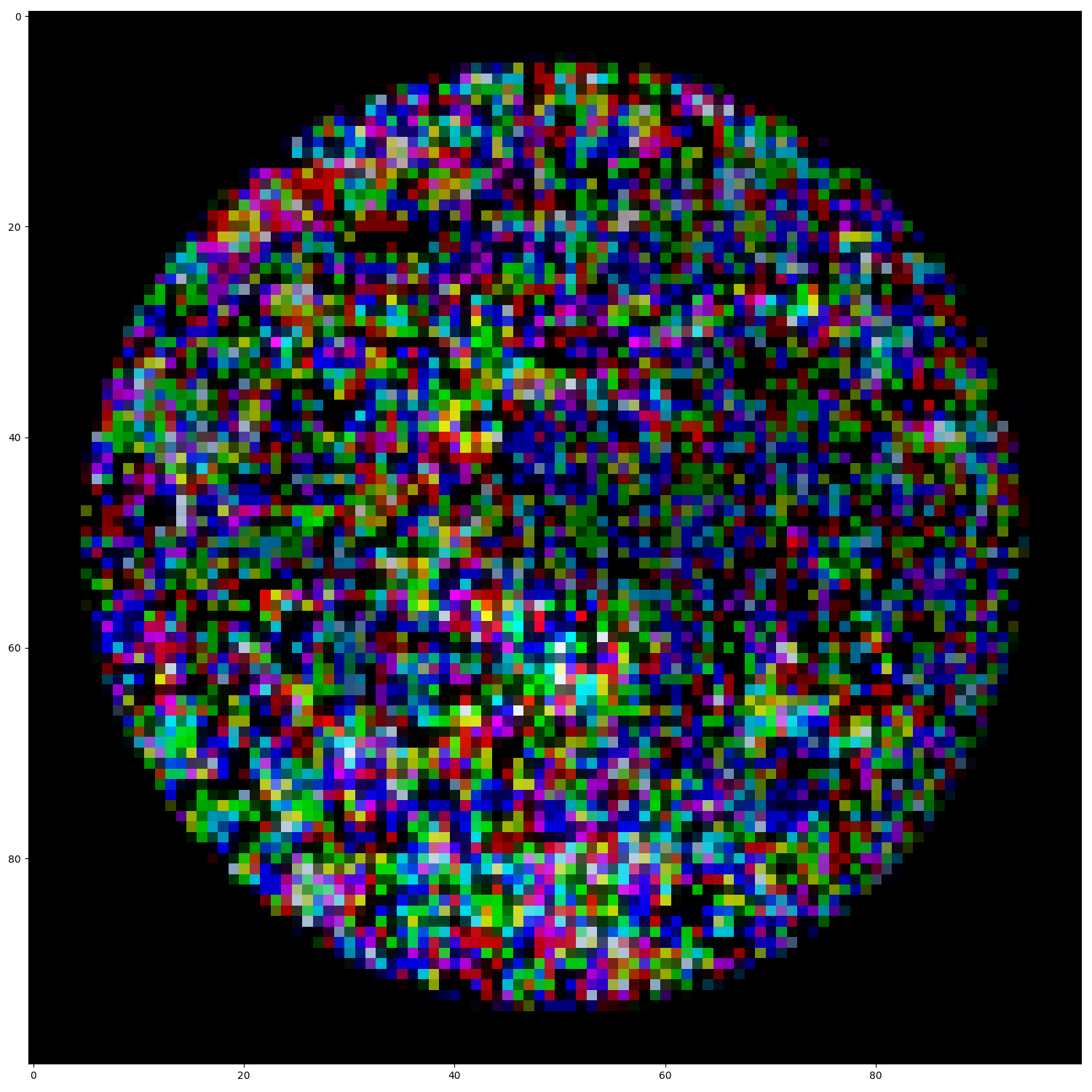}
    \includegraphics[width=0.15\linewidth]{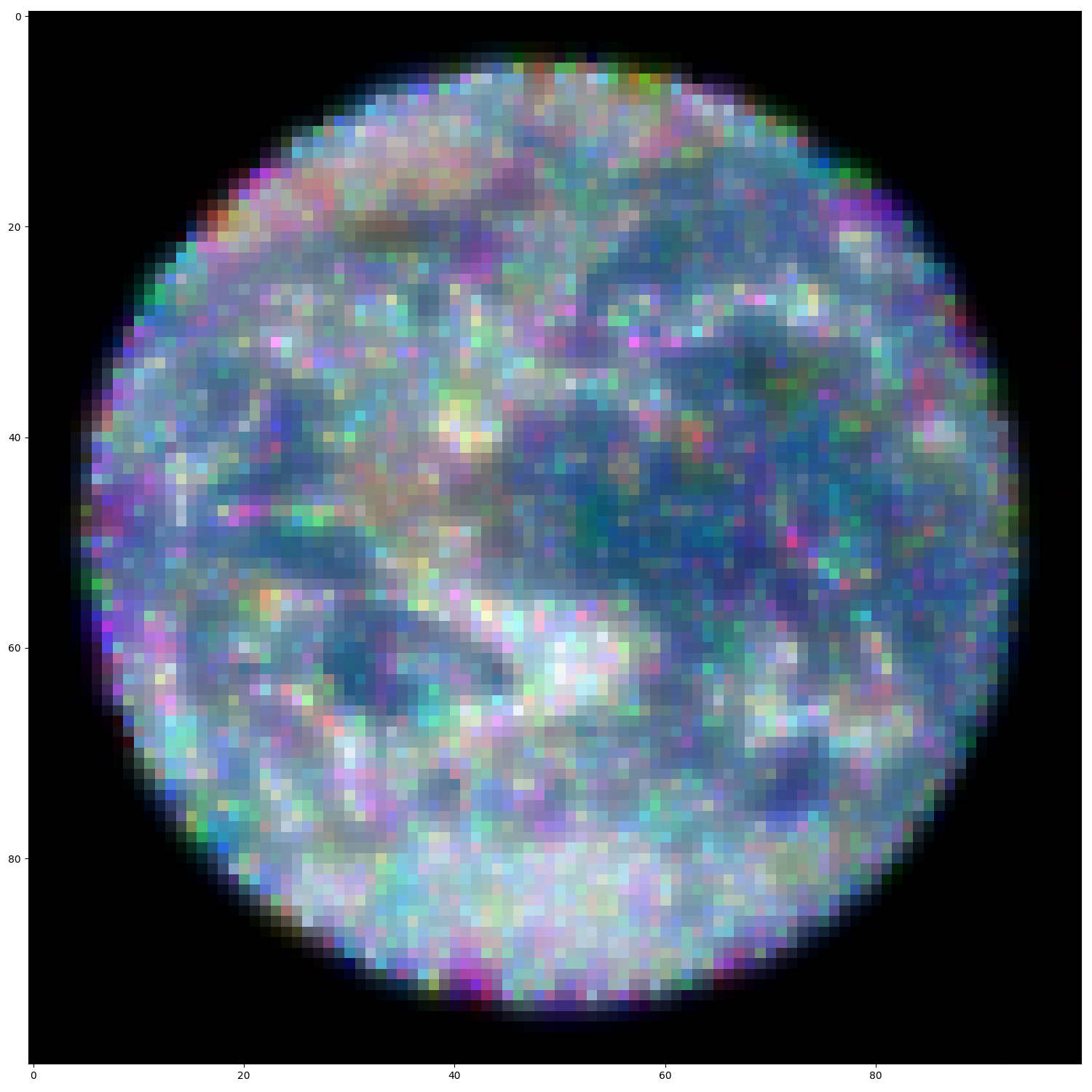}
    \includegraphics[width=0.15\linewidth]{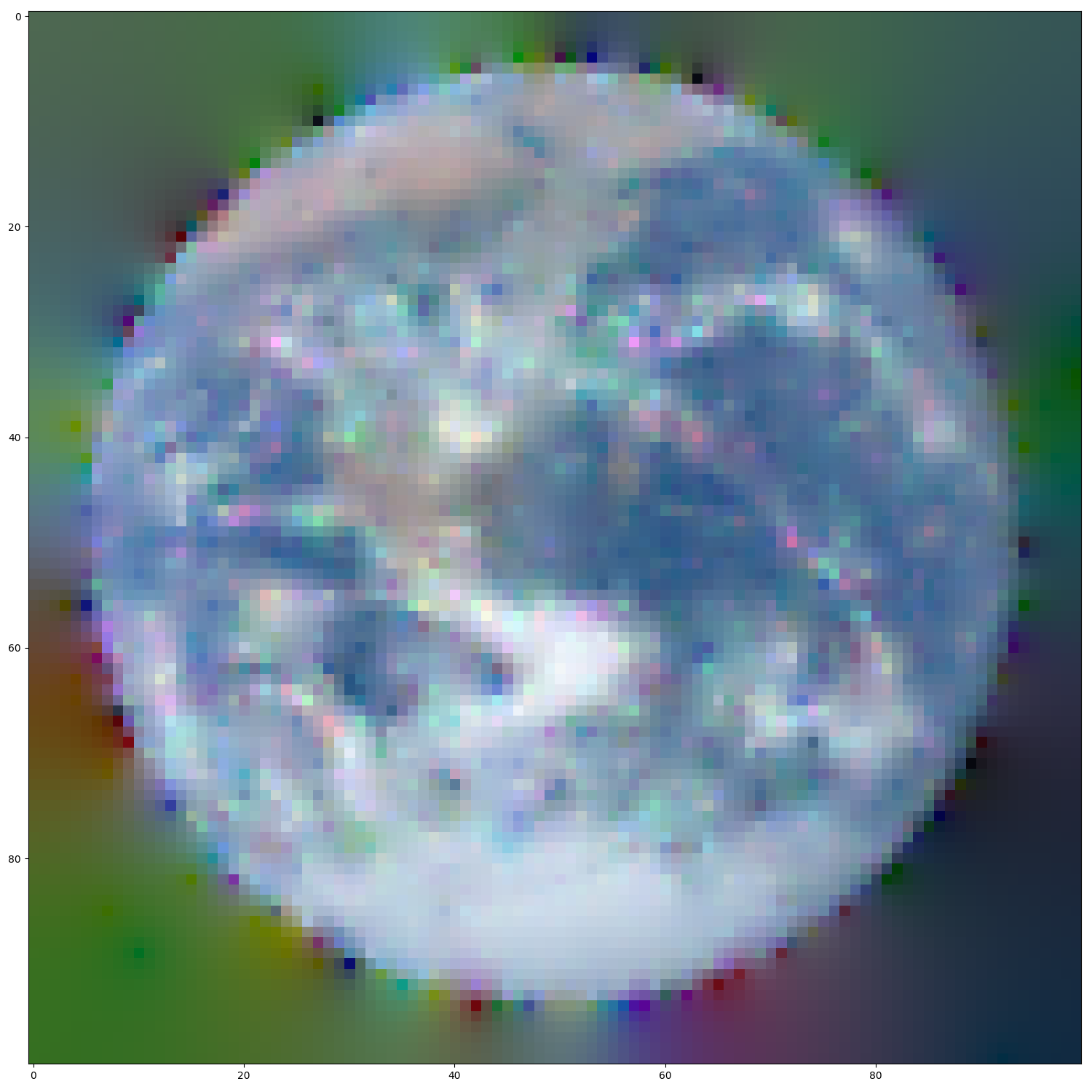}
    \caption{The results of LASSO regularization along with different parameters of $\tau$. In this case, the region of skepticism is the set of zeroes $\zeta = \{a \in \Vx : \gn(a) = 0\}$. The leftmost image is ground truth, the second image is the corrupted signal (i.i.d. across each pixel and channel with dropout probability $p = 0.7$). The last three images are Bernoulli-LASSO restorations with $\tau = 10^{-2}, 10^{-3}$, and $\tau = 0$.}
\end{figure}

In general, the problem of $\ell_0$-regularized regression is NP-Hard \cite{natarajan1995sparse}
. However, numerous approximation methods exist including branch and bound \cite{hazimeh2020sparse} and an $\ell_2$-based greedy algorithm \cite{natarajan1995sparse}. Alternatively, we may consider a relaxed version of the minimization problem in which the $\ell_0$ penalty term is replaced with an $\ell_1$ penalty term. In this case, the relaxed $\fn_{\text{map}}$ can be found via LASSO regression \cite{tibshirani1996regression}, for which many efficient algorithms exist. 

Finally, we draw special attention to the ``no-trust" case where $\zeta = \Vx$, i.e. we are skeptical of all observations. Then the optimization can be written more simply: 
 \begin{align*} \fn_{\text{map}}(\zeta) & \in \gn(\zeta) + \arg\min_{\boldsymbol{x}}\|\B\boldsymbol{x}-\B\boldsymbol{g}\|_2^2 + \tau \|\boldsymbol{x}\|_0
\end{align*}
The benefit of the no-trust estimate is that it makes few assumptions about the nature of the noise and does not require the user to come up with $\zeta$. 

\begin{figure}
    \centering
    \includegraphics[width=0.2\linewidth]{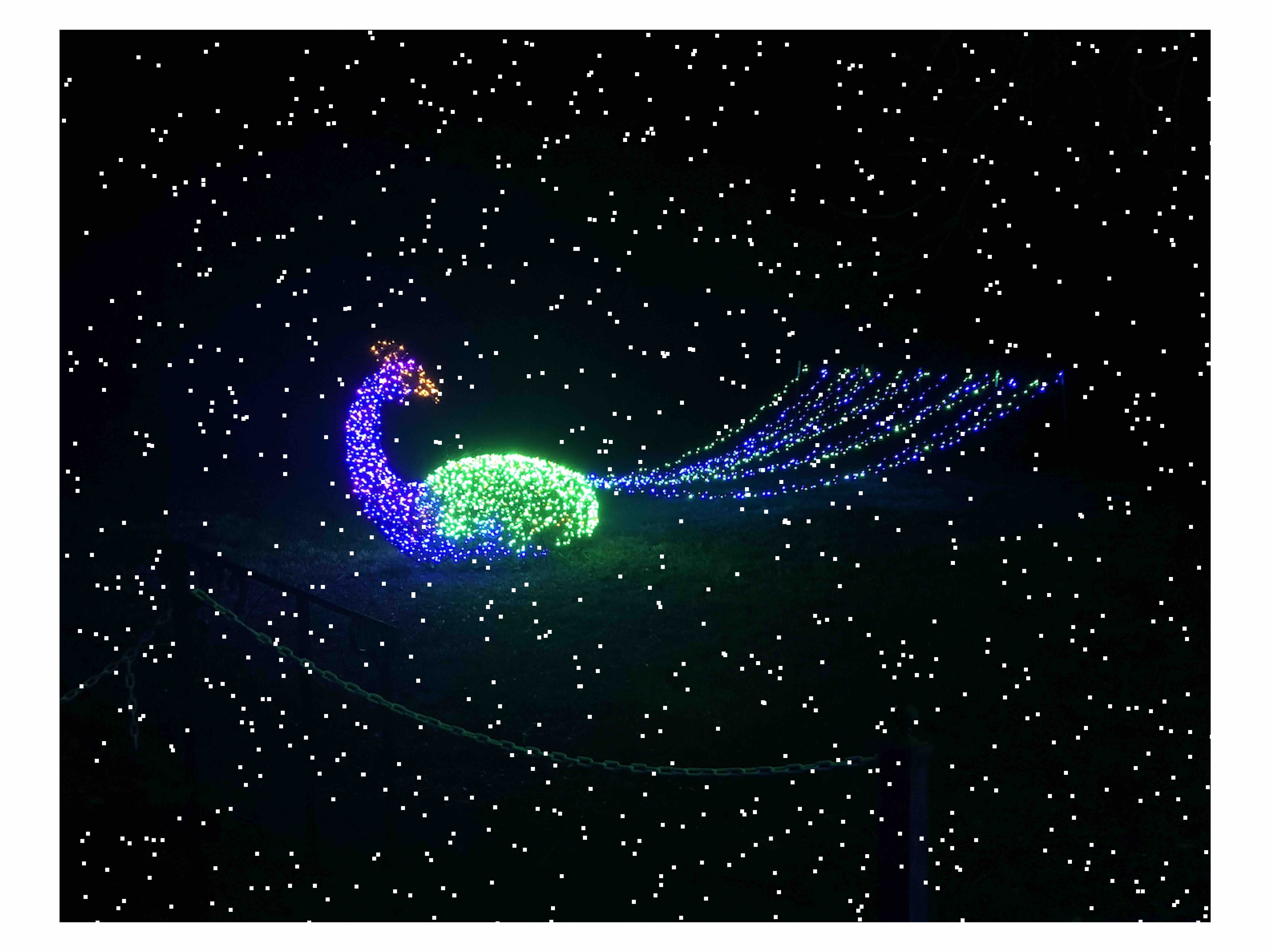}
    \includegraphics[width=0.2\linewidth]{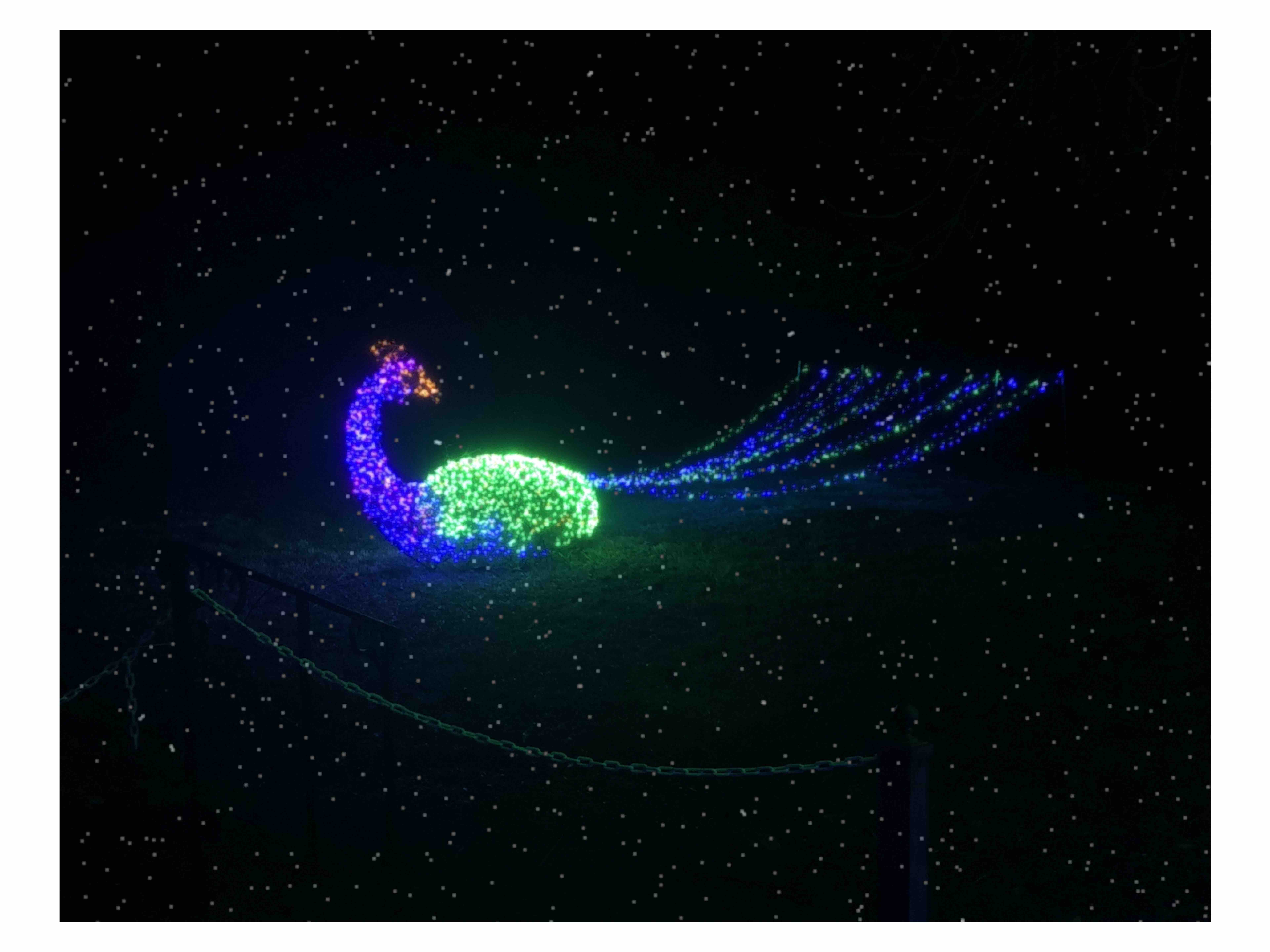}
    \includegraphics[width=0.2\linewidth]{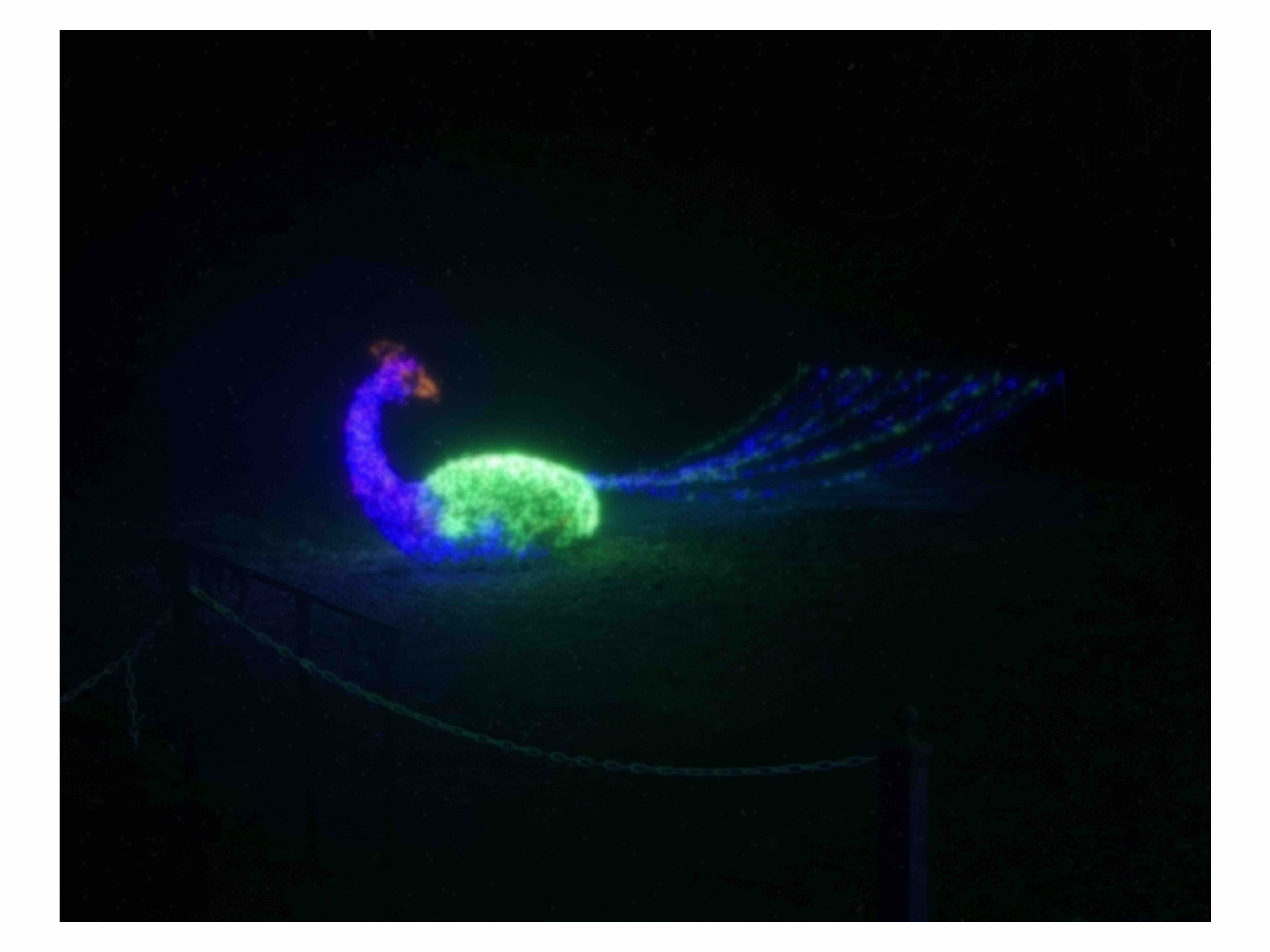}

    \caption{A no trust algorithm applied to an image. Here, approximately 10\% of pixels get corrupted by salt and pepper noise (left). Critically, the algorithm has no explicit knowledge of where. Parameters of $\tau = 10^{-5}, 10^{-6}$ (middle, right) are chosen and paired with LASSO regression.}
\end{figure}

\section{Experiments \& Applications}

\subsection{Gaussian Noise on an Image} 

We first consider  1000 images belonging to the CIFAR-10 data set modeled as signals on $32 \times 32$ grid graph $\Gr$; we use the convention of treating each pixel as a vertex connected to adjacent pixels. Importantly, we note that images \emph{are not} our primary motivation. We include this example primarily to allow visualization of our method before proceeding to more complex graphs. For a fixed image, we add Gaussian noise with different variances $\sigma^2$. We then apply the filter proposed in Theorem \ref{thm: Gaussian}, and consider the $\ell_2$ norm between the restored signal and the ground truth. We compare to two other algorithms: local averaging, and weighted nuclear norm minimization. For the local averaging, we repeatedly set the value of a vertex to be the average of its neighbors for some number of iterations $t$. Note that this is equivalent to applying the powered diffusion operator \cite{coifman2006diffusion} (or equivalently the random walk matrix) to the noisy signal $\gn$.  The nuclear norm minimization based estimate is parameterized by $\tau$, and is given by the solution to $\arg\min_{\fn} \frac{1}{2}\|\fn - \gn\|^2 + \tau \|\fn\|_\star$,
where $\|\fn\|_\star$ is the nuclear norm of $\fn$ viewed as a matrix. The penalty $\tau$ corresponds to a convex relaxation of a low-rank penalty and is designed with the assumption that noise exists over excess left \& right singular vectors of $\gn$. For the spectral estimate, we use the method of moments estimate for $2\kappa \sigma^2$ given by Equation \ref{gauss_strength}. We then calculate, for every image, the restored signal using each possible $t$ and $\tau$. The average percent error is provided in the Table \ref{tab: gaussian denoising}. We see that in the high-noise setting, the spectral denoising algorithm outperforms both local averaging and the nuclear norm estimate as a consequence of our low frequency prior.
\begin{tiny}
\begin{table}[!htbp]
\caption{Total percent error ($\|\fn_{true} - \fn_{map}\|/\|\fn_{true}\|^2$) for each combination of $\sigma, t, \tau$.
}
\label{tab: gaussian denoising}
\centering
\begin{tabular}{lrrrr}
\toprule
{} &    $\sigma= 5$ &    $\sigma = 25$ &  $\sigma = 50$ &   $\sigma = 100$ \\
\midrule
Ours                 &  11.5\% &  18.5\% &  \textbf{18.4\%} &  \textbf{22.6\%}  \\
Avg. (t=1)         &   9.6\% &  \textbf{14.0} \% &  22.5\% &  41.8\%  \\
Avg. (t=2)         &  11.9\% &  14.2\% &  19.6\% &  33.2\% \\
Avg. (t=5)         &  16.5\% &  17.3\% &  19.6\% &  26.8\% \\
N.N. ($\tau$ = 1)   & \textbf{3.9\%} &  19.8\% &  39.7\% &  79.5\% \\
N.N. ($\tau$ = 25)  &   4.0\% &  17.8\% &  37.3\% &  76.9\% \\
N.N. ($\tau$ = 50)  &   5.5\% &  16.1\% &  34.9\% &  74.3\% \\
\bottomrule
\end{tabular}

\end{table}
\end{tiny}

\begin{figure}
\centering
\includegraphics[width=0.7\linewidth]{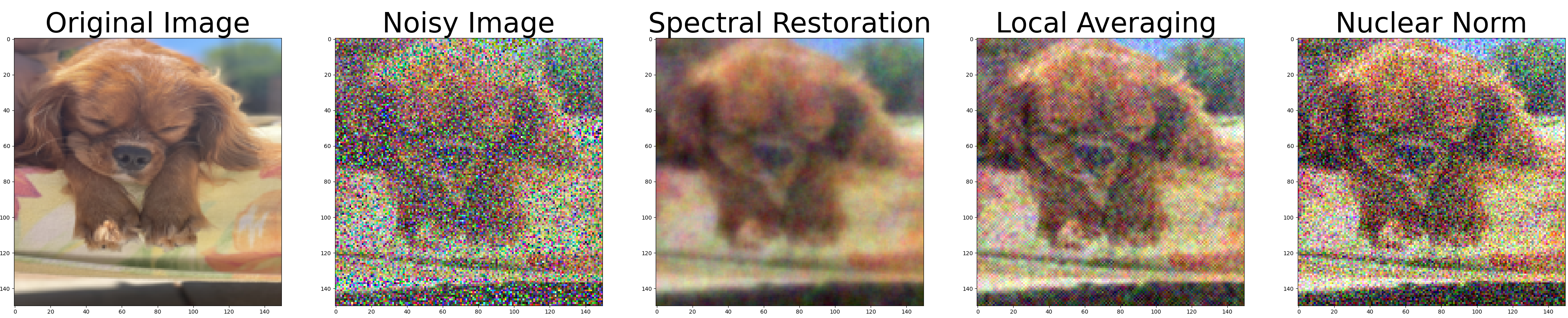}
    \caption{Best restoration on "Noodle" for the spectral, local averaging, and nuclear norm based models when $\sigma = 50$. }
    \label{fig:gauss_restorations}
\end{figure}


\subsection{High Frequency Preservation: Comparison to MAGIC}
\par We revisit MAGIC under our proposed framework. MAGIC takes our prior assumption, that the true signal is likely in the low-frequencies as a fact, rather than a probabilistic statement.  It can be interpreted as choosing $h(\lambda)$ in advance to be equal to $h(\lambda)=(1-\lambda/2)^t$ (where $t$ is a tuned parameter) rather than finding the optimal filter based combining our prior beliefs with the observed signal.  

To illustrate the advantages of our method over MAGIC, we conduct a comparison using Bernoulli dropout. We generate a set of $C = 5$ cluster centers in two dimensional space. Around each, we generate $m = 200$ points. We construct an affinity-based graph with 1000 vertices. We then consider low frequency and high frequency signals. Low frequency signals vary between clusters, while high frequency signals vary within clusters. 
Finally, we randomly set a proportion $p$ of the observations to zero for different values of $p$ and apply each algorithm. 
 Table 3 examines the resulting correlations between estimated and ground truth signals. 

\begin{figure}
\label{frequency_exp}
\centering
    \includegraphics[width=0.5\linewidth]{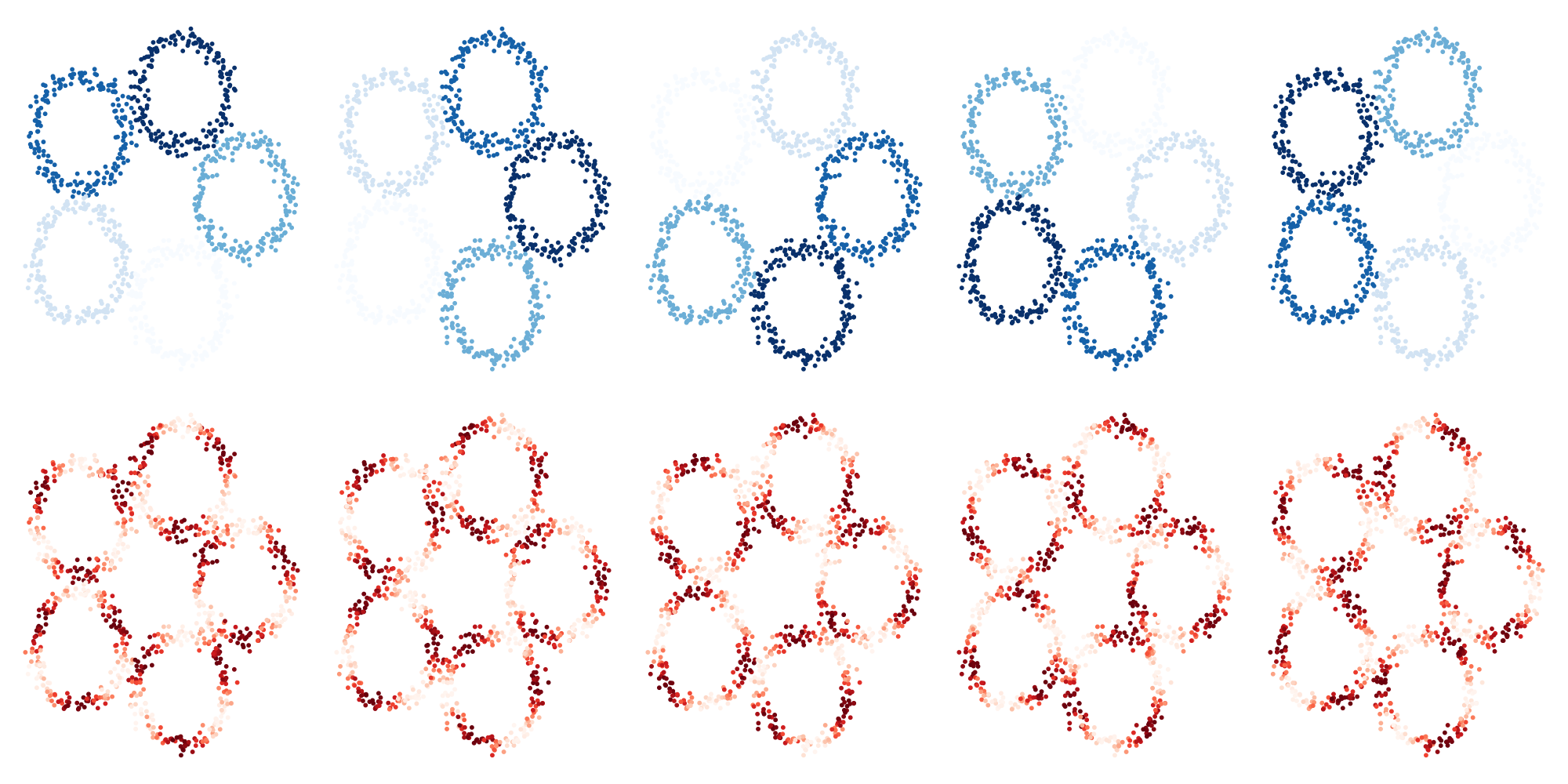}
    \caption{Top: Low frequency signals that vary over clusters. Bottom: High frequency content that periodically varies inside clusters. Each row contains different phases. }
\end{figure}
\begin{tiny}

\begin{table}[!htbp]
\centering
    \caption{Correlations between the true \& MAP signals for each frequency type, algorithm, and dropout probability $p$.}
    \begin{tabular}{lrrrr}
    \toprule
    {} &  p=0.1 &  p=0.5 &  p=0.9 &  p=0.95 \\
    \midrule
    Spectral, Low F.      &   \textbf{1.00} &   \textbf{1.00} &   \textbf{0.97} &  \textbf{0.91} \\
    MAGIC (t=1), Low F    &   0.59 &   0.57 &   0.46 &    0.38 \\
    MAGIC (t=5), Low F.   &   0.99 &   0.96 &   0.75 &    0.52 \\
    MAGIC (t=10), Low F.  &   0.55 &   0.53 &   0.42 &    0.34 \\
    \hline
    Spectral, High F.     &   \textbf{0.87} &   \textbf{0.82} &   \textbf{0.55} &   \textbf{0.41} \\
    MAGIC (t=1), High F   &   0.47 &   0.46 &   0.38 &    0.34 \\
    MAGIC (t=5), High F.  &   0.83 &   0.77 &   0.51 &    0.39 \\
    MAGIC (t=10), High F. &   0.43 &   0.41 &   0.35 &    0.27 \\
    \bottomrule
    \end{tabular}

\end{table}
\end{tiny}
Our algorithm consistently outperforms MAGIC, and the effect is most notable for high-frequency signals. This can be explained, at a high level, by the fact that the powered diffusion operator \cite{coifman2006diffusion} rapidly depresses high-frequency information, which 
 our algorithm is better able to preserve.

\subsection{Denoising Simulated single-cell Data}
\begin{tiny}
\begin{table}[!htbp]
	\centering
	\caption{Percentage error Simulated single-cell Data (lower is better)}
	\label{tab:denoise_results}
	\begin{tabular}{lrr}
		\toprule
		 & Bernoulli & Uniform \\
		\midrule
		Noisy & 50.0\% & 28.1\% \\
		Ours & \textbf{7.9\%} & \textbf{25.5\%} \\
		Local Avg & 39.9\% & 29.3\% \\
		Low Pass & 49.7\% & 28.8\% \\
		High Pass & 100.4\% & 100.3\% \\
		MAGIC & 40.1\% & 30.3\% \\
		\bottomrule
	\end{tabular}
\end{table}
\end{tiny}

\par We next apply our method to single-cell RNA sequence (scRNA-seq).
scRNA-seq data involves counting mRNA molecules in each cell, which is prone to two types of noise which we test our method's ability to remove:
\begin{enumerate}
    \itemsep 0em
    \item Bernoulli dropout. Because of the small number of the mRNAs molecules in the cell, there can be Bernoulli dropouts when the mRNAs are present but not captured by the experiment equipment  \cite{li2018accurate}.
    \item Uniform noise. For a given gene, considering the fact that the failure of mRNA capturing does not happen for all the mRNAs, but only for a percentage, we  model the noise as uniform - the counts are randomly reduced by a uniformly-distributed percentage \cite{van2018recovering}.
\end{enumerate}
\par From the data matrix, we build a nearest neighbor graph \cite{van2018recovering} where each vertex is a cell (modeled as a row vector of gene counts). A column of the matrix (a gene's counts on all the cells) is considered a signal on the graph that we can denoise with our models. By applying the models on all the columns, we obtain a matrix of the denoised data. We compare the denoising performance of our method with four existing methods. On top of the ground truth, we add different types of noise and then compare the performance of our method with MAGIC and other denoising methods: low-pass filter and high-pass band-limit filters defined w.r.t. $\Lap$, and local averaging which is a 1-step random walk on the graph using the row-normalized adjacency matrix. We compute the relative error of the denoised signals with the ground truth. In order to be able to assess the efficacy of our method, we use the bulk gene expression data of \textit{C. elegans} containing 164 worms and 2448 genes \cite{francesconi2014effects} to simulate the ground truth single-cell data, because it does not have the zero-inflation as in noisy scRNA-seq data. As shown in Table \ref{tab:denoise_results} we are better able to recover the true signal than the competing methods.

\section{Conclusion} 
We have introduced a method that denoises high-dimensional data by building a graph, treating the features as signals on the graph, and doing M.A.P estimation to recover the true features as denoised signals. We only rely on a mild prior of smoothness on the graph, making our model general and applicable to a vast variety of data modalities. We produce estimators and efficient algorithms for three types of noise common in real data: Gaussian, uniform-scaling, and Bernoulli dropout. Our model outperforms MAGIC and other methods, thanks to the modeling of the noise.


\bibliographystyle{IEEEbib}
\bibliography{refs}

\onecolumn

\appendix

\subsection{Proof of Theorem \ref{thm: Gaussian}}\label{appdx:pfthm1}
\begin{duplicate}[Gaussian Denoising]
    Suppose we observe some value $\gn$ and let $\Omega_{\gn} = \{ \fn : \boldsymbol{\widehat{f}}(\lambda_1) =  \widehat{\gn}(\lambda_1)\}$. Suppose also that $\gn$ is generated from $\fn$ by additive Gaussian noise with variance $\sigma^2$ in frequencies $\lambda_2\ldots \lambda_n$ and that $\fn$ has density $p_\kappa$ in $\Omega_{\gn}$. Then if we let $h(\lambda_i) = \frac{1}{1 + 2\kappa \sigma^2 \lambda_i}$ be the filter function, the the maximum a posteriori likelihood of $\fn$ given $\gn$ is, 
    \[ \fn_{\text{map}} = h(\gn) \]
    Furthermore, the MAP can be computed in time $\tilde{\mathcal O}\big(m \log(\epsilon^{-1}) \min\big\{\sqrt{\log(n)}, \sqrt{\frac{\lambda_{\max} + 1/2\kappa\sigma^2}{\lambda_{\min} + 1/2\kappa\sigma^2}} \big\}\big)$ to $\epsilon$ accuracy in the $\Lap$-norm.
\end{duplicate}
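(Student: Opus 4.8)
The plan is to reduce the posterior to a family of decoupled scalar quadratics in the frequency domain, read off the minimizer as the claimed filter, and then recognize $h(\gn)$ as the solution of a single symmetric diagonally dominant (SDD) linear system that known fast solvers handle within the stated time. Concretely, I would invoke Bayes' theorem, $p_\kappa(\fn \mid \gn) \propto p_\kappa(\fn)\, p_\kappa(\gn \mid \fn)$ for $\fn \in \Omega_{\gn}$: by the noise model the likelihood factors over the corrupted frequencies as $p_\kappa(\gn \mid \fn) \propto \prod_{i=2}^n \exp\!\big(-\tfrac{1}{2\sigma^2}\big(\widehat{\gn}(\lambda_i) - \widehat{\fn}(\lambda_i)\big)^2\big)$, while Section~\ref{sec: prior} gives $p_\kappa(\fn) \propto \prod_{i=2}^n \exp(-\kappa\lambda_i \widehat{\fn}(\lambda_i)^2)$. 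Neither factor involves $\widehat{\fn}(\lambda_1)$, which is pinned to $\widehat{\gn}(\lambda_1)$ by the constraint $\fn \in \Omega_{\gn}$ (this also makes the posterior a proper density on that hyperplane). Taking negative logarithms, maximizing the posterior over $\Omega_{\gn}$ separates into the scalar problems $\min_{t}\big\{\kappa\lambda_i t^2 + \tfrac{1}{2\sigma^2}(\widehat{\gn}(\lambda_i) - t)^2\big\}$, one per $i \ge 2$; each is strictly convex, and differentiating gives the minimizer $t^\star = \widehat{\gn}(\lambda_i)/(1 + 2\kappa\sigma^2\lambda_i) = h(\lambda_i)\,\widehat{\gn}(\lambda_i)$, while for $i=1$ the constraint gives $\widehat{\fn}_{\text{map}}(\lambda_1) = \widehat{\gn}(\lambda_1) = h(\lambda_1)\widehat{\gn}(\lambda_1)$ since $h(\lambda_1)=1$. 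Expanding in the orthonormal eigenbasis yields $\fn_{\text{map}} = \sum_{i} h(\lambda_i)\widehat{\gn}(\lambda_i)\bpsi_i = h(\gn)$.

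For the complexity bound, I would observe that $h(\gn) = (\Id + 2\kappa\sigma^2\Lap)^{-1}\gn$, since $(\Id + 2\kappa\sigma^2\Lap)^{-1}$ has eigenvalue $(1+2\kappa\sigma^2\lambda_i)^{-1}$ on $\bpsi_i$; hence $\fn_{\text{map}}$ solves $\M\fn = \gn$ with $\M = \Id + 2\kappa\sigma^2\Lap$. The matrix $\M$ is symmetric positive definite, has $\mathcal O(m)$ nonzeros, and is diagonally dominant (in row $a$ the off-diagonal mass is $2\kappa\sigma^2\deg(a) \le 1 + 2\kappa\sigma^2\deg(a)$), so the SDD solver of \cite{cohen2014solving} returns an $\epsilon$-accurate solution in $\tilde{\mathcal O}\big(m\sqrt{\log n}\,\log(\epsilon^{-1})\big)$. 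Alternatively, since $h(\lambda_1)=1$ the component of $\gn$ along $\boldsymbol{1}$ is reproduced exactly, so one may run conjugate gradient on $\M$ restricted to $\boldsymbol{1}^\perp$, where $\M$ has eigenvalues in $[\,1+2\kappa\sigma^2\lambda_{\min},\ 1+2\kappa\sigma^2\lambda_{\max}\,]$ with $\lambda_{\min}=\lambda_2$; this uses $\mathcal O\big(\sqrt{\beta}\,\log(\epsilon^{-1})\big)$ iterations of $\mathcal O(m)$ cost, $\beta = \tfrac{2\kappa\sigma^2\lambda_{\max}+1}{2\kappa\sigma^2\lambda_{\min}+1}$, and taking the cheaper of the two methods gives the stated minimum. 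One last point: these solvers control the $\M$-norm error, and since $\M \succeq 2\kappa\sigma^2\Lap$ we have $\|\cdot\|_{\Lap} \le (2\kappa\sigma^2)^{-1/2}\|\cdot\|_{\M}$, so an $\M$-norm tolerance $\epsilon'$ yields a $\Lap$-norm tolerance $\epsilon'/\sqrt{2\kappa\sigma^2}$ — a rescaling absorbed into the $\tilde{\mathcal O}$.

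The MAP derivation itself is routine completing-of-the-square; the steps that take care are on the computational side — identifying the filter with a single SDD solve, extracting the correct condition number by quotienting out the constant direction (a naive estimate would report $1+2\kappa\sigma^2\lambda_{\max}$ instead of $\beta$), and reconciling the solvers' native $\M$-norm guarantees with the $\Lap$-norm accuracy demanded by the statement.
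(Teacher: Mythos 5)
Your proof is correct, and its core is the same Bayesian computation as the paper's: write $p_\kappa(\fn\mid\gn)\propto p_\kappa(\fn)p(\gn\mid\fn)$, minimize the resulting quadratic subject to the pinned mean, identify the minimizer with the filter $h$, and compute it as a single solve in $\M=\Id+2\kappa\sigma^2\Lap$ via the SDD machinery of \cite{cohen2014solving}. The mechanical route differs modestly: you stay in the frequency domain, where the constraint, prior, and likelihood decouple into independent scalar quadratics $\min_t\{\kappa\lambda_i t^2+\tfrac{1}{2\sigma^2}(\widehat{\gn}(\lambda_i)-t)^2\}$, whereas the paper uses Parseval to pass to the vertex domain, parameterizes $\Omega_{\gn}$ with the projection $\Pi$ onto $\textbf{span}\{\boldsymbol{1}\}^\perp$, sets a vector gradient to zero, and only then diagonalizes $(\Id+2\kappa\sigma^2\Lap)^{-1}$ to recognize the filter; your decoupled version avoids that projection bookkeeping and makes strict convexity per frequency immediate. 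On the computational side you are actually more complete than the paper's appendix, which only treats the \cite{cohen2014solving} branch: you justify the conjugate-gradient branch by restricting to $\boldsymbol{1}^\perp$ (so the effective condition number is $\beta=\tfrac{2\kappa\sigma^2\lambda_{\max}+1}{2\kappa\sigma^2\lambda_{\min}+1}$ with $\lambda_{\min}=\lambda_2$, rather than the naive $1+2\kappa\sigma^2\lambda_{\max}$), which is exactly what the theorem's minimum requires, and you also reconcile the solvers' native $\M$-norm guarantee with the stated $\Lap$-norm accuracy via $\M\succeq 2\kappa\sigma^2\Lap$ — a point the paper leaves implicit.
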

\begin{proof}
We first prove that the MAP is produced by the given filter. Notice first that the following expression holds for the likelihood of $\gn$ given $\fn$, up to a multiplicative constant:
\begin{align*} p(\gn | \fn) & \propto  \prod_{i=2}^n \exp\big(- \frac{1}{2\sigma^2} (\widehat{\gn}(\lambda_i) - \boldsymbol{\widehat{f}}(\lambda_i))^2\big) & \text{independence} \\
&= \exp\big(- \frac{1}{2\sigma^2} \sum_{i=2}^n (\widehat{\gn}(\lambda_i) - \boldsymbol{\widehat{f}}(\lambda_i))^2\big) & \text{log properties}  \\
& = \exp\big(- \frac{1}{2\sigma^2} \sum_{i=1}^n (\widehat{\gn}(\lambda_i) - \boldsymbol{\widehat{f}}(\lambda_i))^2\big) & \text{$\widehat{\gn}(\lambda_1) = \boldsymbol{\widehat{f}}(\lambda_1)$ when $\fn \in \Omega_{\gn}$}\\
& = \exp\big( - \frac{1}{2\sigma^2} \|\boldsymbol{\widehat{f}}-\widehat{\gn}\|^2 \big)  \\
& = \exp\big( -\frac{1}{2\sigma^2} \big) \|\fn-\gn\|^2 & \text{Parseval's Identity} 
\end{align*}
We can multiply this by the \emph{a priori} probability of $\fn$ to obtain the \emph{a posteriori} likelihood $p_\kappa(\fn | \gn)$:
\begin{align*} p_\kappa(\fn | \gn) & \propto p_\kappa(\fn) \exp\big( - \|\fn-\gn\|^2 \big) \\ 
& = \exp\big( - \kappa \fn^\top \Lap \fn - \frac{1}{2\sigma^2} \|\fn-\gn\|^2 \big)
 \end{align*}
Of course, this relation holds among those $\fn$ whose means are the same as $\gn$. To remedy this, we simply parameterize the set of $\fn$ who share a mean with $\gn$. Let $\Pi$ be the projection matrix onto $\text{span}\{\boldsymbol{1}\}^\perp$. We can write $\fn = \fn_1 + \fn_2$ where $\fn_2 = a\boldsymbol{1}$ and $a = \frac{1}{n} \boldsymbol{1}^\top \gn$. Then $\fn_2 = (\Id - \Pi)\gn$. In this case, $\fn^\top \Lap \fn = \fn_1^\top \Lap \fn_1$. Furthermore, $\|\fn - \gn\|^2 = \|\fn_1 - \Pi \gn\|^2$. Thus, by this observation and monotonicity of the exponent, it suffices to minimize the following loss function in $\fn_1$, 
\[ \mathcal L(\fn_1) =  \frac{1}{2\sigma^2} \|\fn_1 - \Pi \gn\|^2 + \kappa \fn_1^\top \Lap \fn_1  \]
Taking a gradient, 
\[ \nabla\mathcal L = \frac{1}{\sigma^2}(\fn_1 - \Pi \gn) + 2\kappa \Lap \fn_1 = \bigg(\frac{1}{\sigma^2}\Id +  2\kappa \Lap\bigg) \fn_1 - \frac{1}{\sigma^2}\Pi \gn  \]
Setting the gradient equal to zero, we obtain the necessary expression for $\fn_1$:
\[ \fn_1 =  \bigg(\Id +  2\kappa\sigma^2 \Lap\bigg)^{-1}\Pi \gn \]
If we plug in our value for $\fn_2$ and recognize that the operation $(\frac{1}{\sigma^2}\Id +  2\kappa \Lap\bigg)^{-1}$ is mean preserving, then we finally obtain the following expression:
\begin{align*} \fn &=  \bigg(\Id +   2\kappa\sigma^2\Lap\bigg)^{-1}\Pi \gn +  (\Id - \Pi)\gn \\
 &=  \bigg(\Id +  2\kappa\sigma^2 \Lap\bigg)^{-1} \gn
\end{align*}
We now prove that this is produced by the given filter. Let $\Lap = \Psi \Lambda \Psi^\top$ be the eigendecomposition of $\Lap$. Then we write,
\begin{align*} \bigg(\Id +  2\kappa\sigma^2 \Lap\bigg)^{-1} & = \bigg(\Psi \Psi^\top + 2\kappa \sigma^2 \Psi \Lambda \Psi^\top  \bigg)^{-1} & \Psi\Psi^\top = \Id \\ 
& = \bigg(\Psi\big(\Id + 2\kappa\sigma^2 \Lambda \big)\Psi^\top \bigg)^{-1} \\
& = (\Psi^\top)^{-1}\big(\Id + 2\kappa\sigma^2 \Lambda \big)^{-1}\Psi^{-1} \\ 
& = \Psi \text{diag}\big(1 + 2\kappa\sigma^2\lambda_i : i \in 1 \ldots n \big)^{-1} \Psi^\top \\ 
& = \Psi \text{diag}\bigg(\frac{1}{1 + 2\kappa\sigma^2\lambda_i} : i \in 1 \ldots n\bigg) \Psi^\top\\
& = \Psi h(\Lambda)\Psi^\top
\end{align*}
Which demonstrates that the claimed filter. To prove that the filter can be computed efficiently, we make an appeal to the solver of \cite{cohen2014solving}. Note that to compute
\[ \fn = \bigg(\Id +  2\kappa\sigma^2 \Lap\bigg)^{-1} \gn \]
It suffices to solve, 
\[ \bigg(\Id +  2\kappa\sigma^2 \Lap\bigg)\fn = \gn \] 
This can be done in time $\mathcal{\tilde{O}}(m\sqrt{\log(n)})$ because $\Id +  2\kappa\sigma^2 \Lap$ belongs to the class SDDM$_0$. This is true because i. rescaling a Laplacian by a positive constant is still a Laplacian and ii. matrices of the form Laplacian + Diagonal belong to SDDM$_0$.
\end{proof}

\subsection{The Gaussian Parameter Estimate}\label{parameter_estimation}
Suppose that $\gn$ is generated in the way that is described. For brevity, let $a = \frac{1}{\sqrt{n}}\boldsymbol{1}^\top \gn$ be the component of $\gn$ in $\bpsi_1$. Note that when $\fn \in \Omega_{\gn}$, then the distribution of $\boldsymbol{\widehat{f}}(\lambda_i)$ is $\mathcal N(0, \frac{1}{2\kappa\lambda_i})$. Furthermore, the distribution of $\boldsymbol{\widehat{f}}(\lambda-i) - \widehat{\gn}(\lambda_i)$ is $\mathcal N(0,\sigma^2)$ (it is known that if a signal is $\mathcal N(0,\sigma^2\Id)$ distributed, then its distribution is preserved under rotations). And thus, the distribution of $\widehat{\gn}(\lambda_i)$ is $\mathcal N\big(0, \sigma^2 + \frac{1}{2\kappa \lambda_i}\big)$. In this case, we can compute a moment of $\gn$ with respect to $\Lap$. 
\begin{align*} \mathbb E[\gn^\top \Lap \gn] & = \mathbb E\bigg[\sum_{i=2}^n\lambda_i \widehat{\gn}(\lambda_i)^2 \bigg] \\ 
& = \sum_{i=2}^n\lambda_i \mathbb E[\widehat{\gn}(\lambda_i)^2] & \text{linearity of expectation} \\ 
& = \sum_{i=2}^n\lambda_i \bigg( \sigma^2 + \frac{1}{2\kappa \lambda_i}\bigg) & \text{normal distribution properties} \\ 
& = \sum_{i=2}^n \sigma^2 \lambda_i + \frac{1}{2\kappa} \\ 
& = \sigma^2 \tr(\Lap) + \frac{1}{2\kappa}(n-1)
\end{align*}
Similarly, we compute again, 
\begin{align*} \mathbb E[\gn^\top \Lap^2 \gn] & = 
\sum_{i=2}^n \lambda_i^2  \bigg( \sigma^2 + \frac{1}{2\kappa \lambda_i}\bigg)  \\ 
& = \sum_{i=2}^n \lambda_i^2 \sigma^2  + \frac{\lambda_i}{2\kappa} \\  
& = \sigma^2 \tr(\Lap^2) + \frac{1}{2\kappa}\tr(\Lap)
\end{align*}
The premise of our method of estimating $\tau$ by $\hat{\tau}$ is finding values of $\hat{\kappa}$ and $\hat{\sigma^2}$ for which these equations coincide with our observations and setting $\hat{\tau} = 2\hat{\kappa}\hat{\sigma}^2$. Depending on the nature of our observations, it could be by chance that $\hat{\sigma^2}$ or $\hat{\kappa}$ are nonpositive.

\subsubsection{Case 1} Consider first the case when such a set of $\kappa$ and $\sigma^2$ exist and are positive. To find $\kappa$ and $\sigma^2$ amounts to merely solving the following $2 \times 2$ system of equations,
\[ \begin{bmatrix} \tr(\Lap) & (n-1) \\ \tr(\Lap^2) & \tr(\Lap) \end{bmatrix} \begin{bmatrix} \hat{\sigma}^2 \\ (2\hat{\kappa})^{-1} \end{bmatrix} =  \begin{bmatrix} \gn^\top \Lap \gn \\ \gn^\top \Lap^2 \gn \end{bmatrix} \]
Let $\M$ denote the above matrix. We use the standard formula of a $2 \times 2$ inverse: 
\[ \frac{1}{\det(M)} \begin{bmatrix} \tr(\Lap) & -(n-1)  \\ -\tr(\Lap^2) & \tr(\Lap) \end{bmatrix}  \begin{bmatrix} \gn^\top \Lap \gn \\ \gn^\top \Lap^2 \gn \end{bmatrix}  \]
And so, 
\[ \hat{\sigma}^2 = \frac{\tr(\Lap)\gn^\top \Lap \gn - (n-1)\gn^\top\Lap^2\gn }{\det(\M)} \hspace{1cm} (2\hat{\kappa})^{-1} = \frac{\tr(\Lap)\gn^\top \Lap^2 \gn - \tr(\Lap^2)\gn^\top\Lap\gn }{\det(\M)} \]
Combining these,
\begin{align*} \hat{\tau} & = 2\hat{\kappa}\hat{\sigma}^2 \\ 
& =  2\frac{\tr(\Lap)\gn^\top \Lap \gn - (n-1)\gn^\top\Lap^2\gn }{\det(\M)} \cdot \frac{\det(\M)}{2\bigg( \tr(\Lap)\gn^\top \Lap^2 \gn - \tr(\Lap^2)\gn^\top\Lap\gn\bigg)} \\
& = \frac{\tr(\Lap)\gn^\top \Lap \gn - (n-1)\gn^\top\Lap^2\gn }{\tr(\Lap)\gn^\top \Lap^2 \gn - \tr(\Lap^2)\gn^\top\Lap\gn} \\ 
& = \frac{(n-1)(\Lap \gn)^\top \Lap \gn  - \tr(\Lap)\gn^\top \Lap \gn}{ \tr(\Lap^2)\gn^\top\Lap\gn - \tr(\Lap)(\Lap \gn)^\top \Lap \gn} \end{align*}

\subsubsection{Case 2} In this case, rather than solving the equations exactly, we may attempt to solve the system as best as possible but with positive coefficients. This amounts to doing a change of variables $x_1 = \hat{\sigma}^2, x_2 = (2\hat{\kappa})^{-1}$, computing,
\[ x_1^\star, x_2^\star = \arg\min_{(x,y) \succeq (0,0)} \M \begin{pmatrix} x \\ y \end{pmatrix} \]
We would then compute $\hat{\sigma}^2, \hat{\kappa}$ from $x_1^\star, x_2^\star$. And again, we compute $\hat{\tau}$ from $\hat{\kappa},\hat{\sigma^2}$.  Thus, it remains to find the best $x_1^\star, x_2^\star$. To compute them, we use the following proposition about the behavior of such systems in $\mathbb R^2$:
\begin{Proposition}
    Suppose $\M = [\boldsymbol{c_1} \: \boldsymbol{c_2}] \in \mathbb R^{2 \times 2}$ and  $\boldsymbol{b} \in \mathbb R^2$ both have all positive entries. Then, 
    \[ \arg\min_{\boldsymbol{x} \preceq 0} \|\M \boldsymbol{x} - \boldsymbol{b}\| = \M^{-1} \textbf{proj}_{\textbf{convex-cone}(\boldsymbol{c_1}, \boldsymbol{c_2})}(\boldsymbol{b}) \]
    And if $\boldsymbol{b} \not \in \textbf{convex-cone}(\boldsymbol{c_1}, \boldsymbol{c_2})$ then there exists an $i \in \{1,2\}$ for which, 
    \begin{align*} \textbf{proj}_{\textbf{convex-cone}(\boldsymbol{c_1}, \boldsymbol{c_2})}(\boldsymbol{b}) & =  \textbf{proj}_{\textbf{span}(\boldsymbol{c_i})}(\boldsymbol{b}) = \frac{\boldsymbol{c_i}^\top \boldsymbol{b_i}}{\|\boldsymbol{c_i}\|^2}\boldsymbol{c_i}\end{align*}
\end{Proposition}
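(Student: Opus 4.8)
\subsection*{Proof proposal for the Proposition}

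The plan is to convert the nonnegatively-constrained least-squares problem — the minimization of $\|\M\boldsymbol{x}-\boldsymbol{b}\|$ over $\boldsymbol{x}\succeq\boldsymbol{0}$, as dictated by the change of variables in the preceding display — into a Euclidean projection onto the cone, exploiting that $\M$ is a bijection. Since $\M=[\boldsymbol{c_1}\ \boldsymbol{c_2}]$ has linearly independent columns (we assume $\M$ invertible, equivalently $\boldsymbol{c_1}\not\parallel\boldsymbol{c_2}$, which holds for the moment matrix at hand), the image of the nonnegative orthant under $\boldsymbol{x}\mapsto\M\boldsymbol{x}$ is exactly $K:=\textbf{convex-cone}(\boldsymbol{c_1},\boldsymbol{c_2})=\{x_1\boldsymbol{c_1}+x_2\boldsymbol{c_2}:x_1,x_2\ge 0\}$. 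Hence minimizing $\|\M\boldsymbol{x}-\boldsymbol{b}\|$ over $\boldsymbol{x}\succeq\boldsymbol{0}$ coincides with minimizing $\|\boldsymbol{y}-\boldsymbol{b}\|$ over $\boldsymbol{y}\in K$. As a finitely generated cone, $K$ is closed and convex, so the minimizer exists, is unique, and is by definition $\textbf{proj}_K(\boldsymbol{b})$; pulling it back through the bijection $\M^{-1}$ gives the first identity, with uniqueness preserved.

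For the second claim, suppose $\boldsymbol{b}\notin K$ and set $\boldsymbol{p}=\textbf{proj}_K(\boldsymbol{b})$, so $\boldsymbol{p}\ne\boldsymbol{b}$. First I would locate $\boldsymbol{p}$ on the boundary of $K$: if $\boldsymbol{p}$ lay in the (two-dimensional) interior of $K$, a small step from $\boldsymbol{p}$ toward $\boldsymbol{b}$ would remain in $K$ and strictly decrease the distance, a contradiction; hence $\boldsymbol{p}\in\partial K=\mathbb{R}_{\ge 0}\boldsymbol{c_1}\cup\mathbb{R}_{\ge 0}\boldsymbol{c_2}$. Next I would rule out $\boldsymbol{p}=\boldsymbol{0}$ using positivity: the variational inequality characterizing the projection, $\langle\boldsymbol{b}-\boldsymbol{p},\boldsymbol{y}-\boldsymbol{p}\rangle\le 0$ for all $\boldsymbol{y}\in K$, would (taking $\boldsymbol{y}=\boldsymbol{c_1}$) give $\langle\boldsymbol{b},\boldsymbol{c_1}\rangle\le 0$ if $\boldsymbol{p}=\boldsymbol{0}$, contradicting that $\boldsymbol{b}$ and $\boldsymbol{c_1}$ have strictly positive entries. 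Therefore $\boldsymbol{p}$ lies in the relative interior of one of the two rays, say $\boldsymbol{p}=t\boldsymbol{c_i}$ with $t>0$.

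Finally, to identify $\boldsymbol{p}$ with the orthogonal projection of $\boldsymbol{b}$ onto $\textbf{span}(\boldsymbol{c_i})$, I would use the variational inequality once more. Since $\boldsymbol{p}=t\boldsymbol{c_i}$ with $t>0$, both $\boldsymbol{p}+\varepsilon\boldsymbol{c_i}$ and $\boldsymbol{p}-\varepsilon\boldsymbol{c_i}$ lie in $K$ for all sufficiently small $\varepsilon>0$; substituting these into $\langle\boldsymbol{b}-\boldsymbol{p},\boldsymbol{y}-\boldsymbol{p}\rangle\le 0$ forces $\langle\boldsymbol{b}-\boldsymbol{p},\boldsymbol{c_i}\rangle=0$. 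Combined with $\boldsymbol{p}\in\textbf{span}(\boldsymbol{c_i})$, these are precisely the defining conditions of the orthogonal projection onto the line $\textbf{span}(\boldsymbol{c_i})$, so $\boldsymbol{p}=\frac{\boldsymbol{c_i}^\top\boldsymbol{b}}{\|\boldsymbol{c_i}\|^2}\boldsymbol{c_i}$, as claimed. The step I would be most careful about is the degenerate geometry: the positivity hypotheses on $\boldsymbol{b},\boldsymbol{c_1},\boldsymbol{c_2}$ are exactly what is needed both to exclude $\boldsymbol{p}=\boldsymbol{0}$ and (implicitly) to guarantee that $\M$ is invertible so that $K$ is a genuine two-dimensional wedge rather than a single ray; everything else is the standard first-order theory of projection onto a closed convex set.
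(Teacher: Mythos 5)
Your proposal is correct and takes essentially the same route as the paper: identify the feasible image $\{\M\boldsymbol{x} : \boldsymbol{x}\succeq 0\}$ with $\textbf{convex-cone}(\boldsymbol{c_1},\boldsymbol{c_2})$, so the constrained least-squares solution is $\M^{-1}$ applied to the cone projection, and when $\boldsymbol{b}$ lies outside the cone place that projection on a boundary ray and identify it with the orthogonal projection onto $\textbf{span}(\boldsymbol{c_i})$. The only differences are cosmetic: you justify the last identification via the variational inequality (which also lets you rule out the apex $\boldsymbol{p}=\boldsymbol{0}$ explicitly), whereas the paper argues via positivity of $\boldsymbol{c_i}^\top\boldsymbol{b}$ forcing the line projection to land on the ray; and note, as a small side remark, that positive entries alone do not make $\M$ invertible---your explicit nondegeneracy assumption (already implicit in the statement's use of $\M^{-1}$) is what is actually needed there.
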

\begin{proof}
    We first begin with the observation that the set $C := \{\M \boldsymbol{x} : \boldsymbol{x} \succeq 0\}$ is, by definition, $\textbf{convex-cone}(\boldsymbol{c_1}, \boldsymbol{c_2})$. Thus, the vector in this set closest to $\boldsymbol{b}$ will be precisely the projection of $\boldsymbol{b}$ onto $C$; denote this projection by $\boldsymbol{y}$. To reproduce the argument $\boldsymbol{x}$ from which $\boldsymbol{y} = \M \boldsymbol{x}$, we simply compute $\M^{-1}\boldsymbol{x}$. 

    It now remains to prove the second part of the claim. The first realization is that when $\boldsymbol{b}$ is not in $C$, then its projection onto $C$ will belong to the boundary of $C$, where the boundary of $C$ is comprised of $\{\alpha \boldsymbol{c_i} : \alpha \geq 0, i \in \{1,2\}\}$. Therefore, there will exist an $i$ for which $\boldsymbol{y} = \alpha \boldsymbol{c_i}$ for some $\alpha \geq 0$. We claim that not only is this the case, but for this $i$, $\boldsymbol{y}$ is the projection of $\boldsymbol{b}$ onto the span of $\boldsymbol{c_i}$. This is true for the following reason: because $\boldsymbol{c_i}$ and $\boldsymbol{y}$ both belong to the nonnegative quadrant, the angle between $\boldsymbol{c_i}$ and $\boldsymbol{y}$ is less than $90^o$. Therefore, $\boldsymbol{c_i}^\top \boldsymbol{b} \geq 0$. Equivalently, $\alpha = \boldsymbol{c_i}^\top \boldsymbol{b} / \|\boldsymbol{c_i}\|^2 \geq 0$. This proves the desired result. We point out that the particular $i$ can be determined by either i. calculating both such projections and determining which has smaller distance to $\boldsymbol{b}$ or ii. sorting $\boldsymbol{b}, \boldsymbol{c_1},$ and $\boldsymbol{c_2}$ by increasing angle they make with the positive x-axis to determine which side of $C$ it is that $\boldsymbol{b}$ lies on. 
\end{proof}

\subsubsection{Estimation Based on Many Signals \& Correctness}
Suppose instead of one signal we have $k$ independently generated signals $\gn_1 \ldots \gn_k$. We may substitute $\gn^\top \Lap^2 \gn$ and $\gn^\top \Lap \gn$ with $\frac{1}{k}\sum_{i=1}^k \gn_i \Lap^2\gn_i$ and $\frac{1}{k}\sum_{i=1}^k \gn_i \Lap\gn_i$, respectively. As the expected value of an i.i.d. average is the same as the expected value of each entry, the same analysis produces an additional estimate of $\tau$:
\[ \hat{\tau} = \frac{(n-1) \bigg(\frac{1}{k}\sum_{i=1}^k \gn_i \Lap^2\gn_i \bigg)  - \tr(\Lap)\bigg(\frac{1}{k}\sum_{i=1}^k \gn_i \Lap\gn_i \bigg)}{ \tr(\Lap^2)\bigg(\frac{1}{k}\sum_{i=1}^k \gn_i \Lap\gn_i \bigg) - \tr(\Lap)\bigg(\frac{1}{k}\sum_{i=1}^k \gn_i \Lap^2\gn_i \bigg)} \]
We remark that, as a consequence of the Law of Large Numbers, as $k \to \infty$, then $\hat{\tau}$ converges almost surely to the true $\tau$:
\begin{align*} \frac{(n-1) \bigg(\frac{1}{k}\sum_{i=1}^k \gn_i \Lap^2\gn_i \bigg)  - \tr(\Lap)\bigg(\frac{1}{k}\sum_{i=1}^k \gn_i \Lap\gn_i \bigg)}{ \tr(\Lap^2)\bigg(\frac{1}{k}\sum_{i=1}^k \gn_i \Lap\gn_i \bigg) - \tr(\Lap)\bigg(\frac{1}{k}\sum_{i=1}^k \gn_i \Lap^2\gn_i \bigg)} & \to
\frac{(n-1) \bigg(\sigma^2 \tr(\Lap^2) + \frac{1}{2\kappa}\tr(\Lap) \bigg)  - \tr(\Lap)\bigg(\sigma^2 \tr(\Lap) + \frac{1}{2\kappa}(n-1) \bigg)}{ \tr(\Lap^2)\bigg(\sigma^2 \tr(\Lap) + \frac{1}{2\kappa}(n-1)\bigg) - \tr(\Lap)\bigg(\sigma^2 \tr(\Lap^2) + \frac{1}{2\kappa}\tr(\Lap) \bigg)} \\ 
& = \frac{2\kappa \sigma^2 (\tr(\Lap^2)(n-1) - \tr(\Lap)^2) + (n-1)\tr(\Lap) - (n-1)\tr(\Lap)}{2\kappa\sigma^2 (\tr(\Lap^2)\tr(\Lap) - \tr(\Lap^2)\tr(\Lap)) + \tr(\Lap^2)(n-1) - \tr(\Lap)^2} \\ 
& = \frac{2\kappa\sigma^2 (\tr(\Lap^2)(n-1) - \tr(\Lap)^2)}{(\tr(\Lap^2)(n-1) - \tr(\Lap)^2)}\\
& = \tau
\end{align*}
\subsubsection{Calculation}
We make some final observations that will allow us to compute these observations relatively efficiently. First, we observe that $\tr(\Lap) = \tr(\D -\A) = \tr(\D) - 0$ is the total degree of the graph. Likewise, $\Lap^2$ has on its $a$th diagonal entry the squared norm of the $a$th row of $\Lap$, the off diagonal terms contribute $\sum_{(a,b) \in E}w(a,b)$ and the diagonal term contributes $\deg(a)^2$, so $\tr(\Lap) = \sum_{a}\big(\deg(a)^2 + \sum_{(a,b) \in E}w(a,b)^2\big)$.

\subsection{Additional Properties}
We can also attempt to compute the distribution of the map estimate. Note, 
\begin{align*} \fn_{\text{map}} - \fn & = \sum_{i=2}^n\bpsi_i \bigg(  \boldsymbol{\widehat{f}}_{\text{map}}(\lambda_i) - \boldsymbol{\widehat{f}}(\lambda_i) \bigg)\\ 
& = \sum_{i=2}^n \bpsi_i \frac{1}{1 + 2\kappa\sigma^2\lambda_i}\widehat{\gn}(\lambda_i) - \boldsymbol{\widehat{f}}(\lambda_i)\bigg) \\ 
& = \sum_{i=2}^n \bpsi_i\bigg( \frac{1}{1 + 2\kappa\sigma^2\lambda_i}(\boldsymbol{\widehat{f}}(\lambda_i) + z_i) - \boldsymbol{\widehat{f}}(\lambda_i) \bigg)\\ 
& = \sum_{i=2}^n\bpsi_i\bigg(  \underbrace{\boldsymbol{\widehat{f}}(\lambda_i)}_{\mathcal N(0, (2\kappa\lambda_i)^{-1})} \bigg(\frac{1}{1 + 2\kappa\sigma^2\lambda_i} - 1\bigg) + \underbrace{z_i}_{\mathcal N(0,\sigma^2)}\frac{1}{1 + 2\kappa \sigma^2\lambda_i} \bigg)\\ 
& = \sum_{i=2}^n\bpsi_i\bigg(  \underbrace{\boldsymbol{\widehat{f}}(\lambda_i)\bigg(\frac{1}{1 + 2\kappa\sigma^2\lambda_i} - 1\bigg)}_{\mathcal N\big(0, (2\kappa\lambda_i)^{-1} \big(\frac{1}{1+2\kappa\sigma^2\lambda_i}-1\big)^2\big) }  + \underbrace{z_i\bigg(\frac{1}{1 + 2\kappa \sigma^2\lambda_i}\bigg)}_{\mathcal N(0,\frac{\sigma^2}{(2\kappa\sigma^2\lambda_i + 1)^2})} \bigg)\\ 
& = \sum_{i=2}^n \bpsi_i\bigg( \underbrace{\boldsymbol{\widehat{f}}(\lambda_i)\bigg(\frac{1}{1 + 2\kappa\sigma^2\lambda_i} - 1\bigg) + z_i\bigg(\frac{1}{1 + 2\kappa \sigma^2\lambda_i}\bigg)}_{\mathcal N(0,(2\kappa\lambda_i)^{-1} \big(\frac{1}{1+2\kappa\sigma^2\lambda_i}-1\big)^2 + \frac{\sigma^2}{(2\kappa\sigma^2\lambda_i + 1)^2})} \bigg)
\end{align*}
We then compute, 
\begin{align*} (2\kappa\lambda_i)^{-1} \big(\frac{1}{1+2\kappa\sigma^2\lambda_i}-1\big)^2 + \frac{\sigma^2}{(2\kappa\sigma^2 + 1)^2} 
& = \frac{1}{2\kappa\lambda_i} \bigg( \frac{2\kappa\sigma^2\lambda_i}{1 + 2\kappa\sigma^2\lambda_i}\bigg)^2 + \frac{\sigma^2}{(2\kappa\sigma^2 + 1)^2}  \\ 
& = \frac{1}{(2\kappa\sigma^2\lambda_i + 1)^2}\bigg( \frac{(2\kappa\sigma^2\lambda_i)^2}{2\kappa\lambda_i} + \sigma^2 \bigg) \\ 
& = \frac{1}{(2\kappa\sigma^2\lambda_i + 1)^2}\bigg( 2\kappa \sigma^4 \lambda_i + \sigma^2 \bigg) \\ 
& = \sigma^2 \frac{2\kappa\sigma^2\lambda_i + 1}{(2\kappa\sigma^2\lambda_i + 1)^2}  \\ 
& = \frac{\sigma^2}{2\kappa\sigma^2\lambda_i + 1}
\end{align*}
Therefore, there exists coefficients $c_2 \ldots c_n$ each distributed $\mathcal N(0,  \frac{\sigma^2}{2\kappa\sigma^2\lambda_i + 1})$. If we compile these coefficients into a vector $\boldsymbol{c} = (0,c_2 \ldots c_n)$
\[ \fn_{\text{map}}-\fn = \sum_{i=2}^n \bpsi_i  c_i = \Psi \boldsymbol{c} \]
As $\boldsymbol{c}$ follows a multivariate normal distribution, so does $\Psi \boldsymbol{c}$. It is mean $0$ and has covariance matrix, 
\begin{align*} \text{Cov}(\Psi \boldsymbol{c}) &= \Psi \text{Cov}(\boldsymbol{c})\Psi^\top \\ 
& = \Psi \text{diag}(0,c_2 \ldots c_n) \Psi^\top \\ 
& = \Psi \text{diag}\big(0, \frac{\sigma^2}{2\kappa\sigma^2\lambda_i + 1} : i \in 2\ldots n \big) \Psi^\top \\ 
& = \sigma^2 \bigg(\Pi + 2\kappa\sigma^2 \Lap\bigg)^+
\end{align*}
From this expression, we can determine the behavior of the error for different values of $\sigma$ and $\kappa$. When $\sigma = 0$, we recover the exact signal, and the covariance matrix goes to zero. On the other hand, when $\kappa \to 0$, the covariance matrix approaches $\sigma^2 \Pi$. This is because in the limit for such $\kappa$, we rely less and less on our prior information. In such a case, the map estimate is $\fn_{\text{map}} \to \gn$, and the above formula is simply the covariance matrix of $z = \gn - \fn$.

\subsection{The Bernoulli Model}
\subsubsection{Harmonic Interpolation \& its Runtime}
It is worth beginning with an explanation of the harmonic interpolation algorithm suggested by \cite{zhu2003semi}. The authors consider, as we do, the situation in which a signal $\gn$ is known in a set $S$ and interpolated to a new signal $\fn$ defined over $S^c$ such that the total energy $\fn^\top \Lap \fn$ is minimal. The authors state that the optimal $\fn(S^c)$ is given by $\Lap(S^c,S^c)^{-1}\A(S^c,S)\gn(S)$. 

\begin{duplicate}[Restated from \cite{zhu2003semi}]\label{thm: S known}
    Suppose $S$ has at least one edge going to $S^c$. Then there exists a unique solution to $\min_{\fn \in \Omega}\fn^\top\Lap \fn$. The interpolation of $\fn$ to $S^c$ is given by
    \begin{equation}\label{eqn: fmap formula} \fn_{\text{map}}(S^c) = \Lap(S^c,S^c)^{-1}\A(S^c,S)\gn(S). 
    \end{equation}
\end{duplicate}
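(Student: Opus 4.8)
The plan is to treat $\min_{\fn \in \Omega}\fn^\top \Lap \fn$ as the minimization of a convex quadratic over the affine subspace $\Omega = \{\fn : \fn(S)=\gn(S)\}$, reduce it to an unconstrained problem in the free coordinates $\fn(S^c)$, and identify the unique critical point. First I would partition coordinates into $S$ and $S^c$, write $\xb = \fn(S^c)$, and use $\fn(S)=\gn(S)$ to expand the quadratic form in block form:
\[ \fn^\top \Lap \fn = \xb^\top \Lap(S^c,S^c)\,\xb + 2\,\xb^\top \Lap(S^c,S)\,\gn(S) + \gn(S)^\top \Lap(S,S)\,\gn(S). \]
Since $S$ and $S^c$ are disjoint and $\D$ is diagonal, $\Lap(S^c,S) = -\A(S^c,S)$; hence setting the $\xb$-gradient to zero gives the normal equation $\Lap(S^c,S^c)\,\xb = \A(S^c,S)\,\gn(S)$, which is exactly \eqref{eqn: fmap formula} once $\Lap(S^c,S^c)$ is shown to be invertible.

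The crux, and the step I expect to demand the most care, is establishing that $\Lap(S^c,S^c)$ is positive definite, which simultaneously gives strict convexity of the objective and uniqueness of the minimizer. I would argue this directly from the energy form: given $\xb\in\mathbb{R}^{|S^c|}$, extend it by zero on $S$ to get $\tilde{\xb}\in\mathbb{R}^n$, so that $\xb^\top \Lap(S^c,S^c)\,\xb = \tilde{\xb}^\top \Lap \tilde{\xb} = \sum_{\{a,b\}\in\Edge} w(a,b)\big(\tilde{\xb}(a)-\tilde{\xb}(b)\big)^2 \ge 0$. If this vanishes, then $\tilde{\xb}$ is constant across every edge, hence constant on each connected component of $\Gr$; since $\Gr$ is connected, $\tilde{\xb}$ is globally constant, and because it vanishes on $S$ — which is nonempty, as the hypothesis of an edge from $S$ to $S^c$ forces both $S$ and $S^c$ nonempty — we get $\tilde{\xb}\equiv 0$, so $\xb = 0$. (Equivalently, one may write $\Lap(S^c,S^c)$ as the Laplacian of the subgraph induced on $S^c$ plus the nonnegative diagonal of boundary degrees $d^{\mathrm{out}}_a = \sum_{b\in S:\,\{a,b\}\in\Edge} w(a,b)$, and note that connectivity plus the edge hypothesis forces $d^{\mathrm{out}}_a>0$ for at least one vertex in each component of the induced subgraph.) This is precisely where the standing assumption that $\Gr$ is connected and the hypothesis that $S$ has an edge to $S^c$ are used.

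Finally I would assemble the pieces: positive definiteness of $\Lap(S^c,S^c)$ makes $\xb\mapsto \fn^\top\Lap\fn$ a strictly convex quadratic on $\mathbb{R}^{|S^c|}$, so its unique minimizer is the unique solution of the normal equation, namely $\fn_{\text{map}}(S^c) = \Lap(S^c,S^c)^{-1}\A(S^c,S)\gn(S)$; together with the fixed block $\fn_{\text{map}}(S)=\gn(S)$ this is the unique solution over $\Omega$. As a remark for the runtime claim quoted in the main text, I would observe that $\Lap(S^c,S^c)$ is a Laplacian plus a nonnegative diagonal, hence lies in the SDDM$_0$ class, and that only the vertices of $S^c\cup\partial S$ and the edges of $E(S^c,S^c)\cup E(S^c,S)$ appear, so the linear system solves to accuracy $\epsilon$ in $\tilde{\mathcal O}(\widehat{m}\sqrt{\log\widehat{n}}\log(\epsilon^{-1}))$ time via the solver of \cite{spielman2004nearly}.
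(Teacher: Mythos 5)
Your proof is correct, but it is worth noting that the paper itself does not prove this statement: it is restated from the cited reference \cite{zhu2003semi}, and the paper's appendix only supplies the computational discussion (restricting attention to $S^c \cup \partial S$, observing that $\Lap(S^c,S^c)$ is a Laplacian plus a nonnegative diagonal and hence SDDM, and invoking a fast solver). What you have done is fill in the omitted mathematical content with the standard harmonic-extension argument, and it goes through: the block expansion of $\fn^\top\Lap\fn$ with $\fn(S)=\gn(S)$ fixed, the identity $\Lap(S^c,S) = -\A(S^c,S)$, and the normal equation $\Lap(S^c,S^c)\,\fn(S^c) = \A(S^c,S)\gn(S)$ are all right, and your positive-definiteness argument (extend $\xb$ by zero, use $\xb^\top\Lap(S^c,S^c)\xb = \tilde{\xb}^\top\Lap\tilde{\xb}$, then connectivity of $\Gr$ plus nonemptiness of $S$ forces $\tilde{\xb}\equiv 0$) correctly uses exactly the standing connectivity assumption and the edge hypothesis, yielding strict convexity and hence existence, uniqueness, and the formula in one stroke. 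Your closing remark on the runtime is consistent with the paper's appendix treatment (the paper invokes the solver of \cite{cohen2014solving} there, while the main text cites \cite{spielman2004nearly}; either way the point is that $\Lap(S^c,S^c)$ is SDDM and only $\widehat{n}$ vertices and $\widehat{m}$ edges enter). In short: correct, self-contained, and more complete than what the paper records, at the modest cost of reproving a known result the paper is content to cite.
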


We emphasize that the interpolation problem does not depend on the entirety of $S$, but rather those vertices $s \in S$ which is connected by an edge to some $\bar{s} \in S^c$; these vertices are precisely $\partial S$. Therefore, it suffices to consider the subgraph $H$ induced by the vertices $S^c \cup \partial S$, which has by definition $\hat{n}$ vertices and $\hat{m}$ edges. We remark that $\A(S^c,S)\gn(S)$ may be computed in time $\mathcal O(\hat{m})$, since for each $a \in S^c$,
\[ \big(\A(S^c,S)\gn(S)\big)(a) = \sum_{(a,b) \in E : b \in S}w(a,b)\gn(b) \]
By iterating over all such vertices $a$, we count once each edge $e \in E(S^c,S)$. As this is no more than the total number of edges $\hat{m}$ in $H$, this may be done in time $\mathcal O(\hat{m})$. We now consider the matrix $\Lap(S^c, S^c)$. This can be written as a sum $\tilde{\Lap} + \tilde{\D}$, where $\tilde{\Lap}$ is the Laplacian of the graph induced by $S^c$ (a subgraph of $H$) and $\tilde{\D}$ is a diagonal matrix. This is a SDDM matrix with no more than $\mathcal O(\hat{m})$ entries. Applying the solver of \cite{cohen2014solving}, we may solve the equation $\Lap(S^c,S^c)\fn(S^c) = \A(S^c,S)\gn(S)$ in time $\tilde{\mathcal O}(\hat{m}\sqrt{\log(\hat{n})})$.

\subsubsection{Derivation of the M.A.P. Estimate}
Thankfully, most of the computations involved in the Bernoulli model are fairly elementary. Suppose we have our ``set of suspicion'' $\zeta$ and an estimate $\fn$ of the true signal. In order for $\gn$ to be produced by $\fn$, it needs to be that $\|\fn(\zeta)-\gn(\zeta)\|_0$ observations get sent to $S^c$, all independently and with probability $p$. Otherwise, $\gn(a) = \fn(a)$ with probability $1-p$ for each of $|\zeta| - \|\fn(\zeta) - \gn(\zeta)\|_0$ observations. The result is that the conditional likelihood of $\gn$ given $\fn$ is,
\[ p^{\|\fn(\zeta)-\gn(\zeta)\|_0}(1-p)^{|\zeta| - \|\fn(\zeta)-\gn(\zeta)\|_0}\]
Multiplying this by the prior probability $p_\kappa(\fn)$, we obtain the conditional likelihood of $\fn$ given $\gn$: 
\[ p_\kappa(\fn | \gn) \propto \exp(-\kappa\fn^\top\Lap\fn)p^{|\fn(\zeta)-\gn(\zeta)\|_0}(1-p)^{|\zeta| - \|\fn(\zeta)-\gn(\zeta)\|_0}  \]
And thus, 
\begin{align*} \log(p_\kappa(\fn|\gn)) & = -\kappa \fn^\top \Lap \fn + \|\fn(\zeta)-\gn(\zeta)\|_0\log(p) + (|\zeta| - \|\fn(\zeta)-\gn(\zeta)\|_0)\log(1-p) + \text{constant} \\ 
 & = -\kappa \fn^\top \Lap \fn + \|\fn(\zeta)-\gn(\zeta)\|_0(\log(p) - \log(1-p)) + \text{constant}  \end{align*}
 Thus, to maximize likelihood, we minimize, 
 \[  \kappa \fn^\top \Lap \fn - \|\fn(\zeta)-\gn(\zeta)\|_0(\log(p) - \log(1-p))  \]
 Which is equivalent to minimizing, 
  \[   \fn^\top \Lap \fn + \|\fn(\zeta)-\gn(\zeta)\|_0\underbrace{\frac{\log(1-p) - \log(p)}{\kappa} }_{\tau \text{ as defined}} \]
Again, the sign of $\tau$ is dictated by the sign of $\log(1-p)-\log(p)$, which is positive when $1-p > p$ (i.e. $p < 1/2$), negative when $1-p < p$ (i.e. $p > 1/2$) and zero when $p = 1/2$. When this is the case, the $\ell_0$ penalty term is actually negative. Thus, we benefit from insisting that $\fn(a) \in S^c$, since that increases the conditional likelihood of $\gn$ and allows us to vary $\fn$ over the smoothness term to optimality. 
\subsubsection{The Language of Ridge Regression}
We stated an optimization for the Bernoulli model in terms of the incidence matrix. We make that formal now. First, we do assume the known property that $\Lap = \B^\top \B$. If we write this out in block notation, that means, 
\begin{align*} \begin{bmatrix}
    \Lap(\zeta,\zeta) & \Lap(\zeta,\zeta^c) \\ 
    \Lap(\zeta^c,\zeta) & \Lap(\zeta^c,\zeta^c)
\end{bmatrix} & = \Lap \\ 
& = \B\B^\top \\ 
& = 
\begin{bmatrix}
    \B(E,\zeta)^\top \\ \B(E,\zeta^c)^\top
\end{bmatrix} \begin{bmatrix}
    \B(E,\zeta) & \B(E,\zeta^c) 
\end{bmatrix} \\
& = \begin{bmatrix}
    \B(E,\zeta)^\top\B(E,\zeta) & \B(E,\zeta)^\top\B(E,\zeta^c)  \\ 
    \B(E,\zeta^c)^\top\B(E,\zeta) & \B(E,\zeta^c)^\top\B(E,\zeta^c) 
\end{bmatrix} 
\end{align*}
We compare both of these blocks elementwise to equate outer products of the incidence matrix to submatarices of the Laplacian. Additionally,
\begin{align*} \fn^\top \Lap \fn & = \begin{bmatrix} \fn(\zeta)^\top & \fn(\zeta^c)^\top \end{bmatrix} \begin{bmatrix}
    \Lap(\zeta,\zeta) & \Lap(\zeta,\zeta^c) \\ 
    \Lap(\zeta^c,\zeta) & \Lap(\zeta^c,\zeta^c)
\end{bmatrix}\begin{bmatrix} \fn(\zeta) \\ \fn(\zeta^c) \end{bmatrix} \\ 
& = \fn(\zeta)^\top \Lap(\zeta,\zeta)\fn(\zeta) + 2\fn(\zeta)\Lap(\zeta,\zeta^c)\fn(\zeta^c) + \fn(\zeta^c)\Lap(\zeta^c,\zeta^c)\fn(\zeta^c) \\ 
& = \fn(\zeta)^\top \B(E,\zeta)^\top \B(E,\zeta) \fn(\zeta) + 2\fn(\zeta) \B(E,\zeta)^\top \B(E,\zeta^c)\fn(\zeta^c) + \fn(\zeta^c)\B(E,\zeta^c)^\top \B(E,\zeta^c) \fn(\zeta^c) \\ 
& = \big(\B(E,\zeta)\fn(\zeta)-\B(E,\zeta^c)\fn(\zeta^c) \big)^\top\big(\B(E,\zeta)\fn(\zeta)-\B(E,\zeta^c)\fn(\zeta^c) \big)\\
& = \big\|\B(E,\zeta)\fn(\zeta)-\B(E,\zeta^c)\fn(\zeta^c)  \big\|_2^2 
\end{align*}
Finally, we use the fact that for all valid $\fn$, $\fn(\zeta^c) =\gn(\zeta^c)$ since $\zeta^c \subseteq S$. Combining all of these observations, it suffices to minimize, 
\[ \big\|\B(E,\zeta)\fn(\zeta)-\B(E,\zeta^c)\gn(\zeta^c)  \big\|_2^2 + \tau\|\fn(\zeta)-\gn(\zeta)\|_0 \]
Among all $\fn(\zeta)$ (and simply set $\fn(\zeta^c) =\gn(\zeta^c)$. Note that a LASSO solver might prefer an $\ell_1$ penalty term which uses the coefficients, and not their difference with another vector. For this reason, we also consider writing $\fn(\zeta) = \gn(\zeta) + \boldsymbol{x}$ for some ``difference'' variable $\boldsymbol{x}$. In this case, we can compute the best difference: 
\[ \arg\min_{\boldsymbol{x}} \|\B(E,\zeta)\boldsymbol{x} + \B(E,\zeta)\gn(\zeta) - \B(E,\zeta)\gn(\zeta^c)\|_2^2 + \tau \|\boldsymbol{x}\|_0 \]
And add the result back to $\gn(\zeta)$. It is worth that any matrix $B$ can be used such that $B^\top B = \Lap$; the square root of $\Lap$ is another logical choice. Indeed, this would size down the problem, but at much greater initial computational cost. An approximate approach for massive graphs may be to reduce the dimension of each feature using the JLT \cite{achlioptas2003database}.

\subsection{The Update Rule of the CCP}

    The algorithm of the CCP is to approximate a function of the form convex + concave by taking the first-order Taylor expansion of the concave portion; as the sum of a convex function and a linear function is convex, so is the new problem. In this case, we write, 
    \begin{align*} \mathcal L(\fn) &= \kappa \fn^\top \Lap \fn + \sum_{a \in \Vx} \log |\fn(x)| \\ 
    & =\mathcal L_{vex}(\fn) + \mathcal L_{cave}(\fn)
    \end{align*}
    Note that, 
    \[ \frac{\partial}{\partial \fn(a)}\mathcal L_{cave} = \frac{1}{|\fn(a)|} \]
    Thus, a Taylor expansion of $\mathcal L_{cave}$ about some center $\fn^t$ is, 
    \begin{align*} \widehat{\mathcal L}_{cave}(\fn ; \fn^t) & = \mathcal L_{cave}(\fn^t) + (\fn-\fn^t)^\top \nabla \mathcal L_{cave}(\fn^t) \\ 
    & = \mathcal L_{cave}(\fn^t) + \sum_{i=1}^n \frac{1}{|\fn^t(a)|}(\fn(a)-\fn^t(a)) \\ 
    & = \sum_{i=1}^n \frac{\fn(a)}{|\fn^t(a)|} + \text{constant}
    \end{align*}
    Thus, the function we try to minimize at each step of the iteration is, 
    \[ \mathcal L_{vex}(\fn)  + \widehat{\mathcal L}_{cave}(\fn ; \fn^t) = \kappa \fn^\top \Lap \fn +  \sum_{i=1}^n \frac{\fn(a)}{|\fn^t(a)|} + \text{constant} \]
    Of course, it suffices to ignore the constant terms for the purpose of optimization. Thus, the iteration is as claimed, 
    \begin{equation}\label{opt:qp} \fn^{t+1} \in \arg\min_{\Omega_{\gn}} \kappa \fn^\top \Lap \fn +  \sum_{i=1}^n \frac{\fn(a)}{|\fn^t(a)|} \end{equation}
    Because the CCP is generally a descent algorithm, this special case is a descent algorithm as well. It remains to explain why this is a Quadratic Program. Of course, the loss function is quadratic in $\fn$, so it remains to discuss the feasible region. One might recall that the feasible region is $\Omega_{\gn} = \{\fn : 0 \leq \gn(a) \leq \fn(a) \text{ or } \fn(a) \leq \gn(a) \leq 0\}$. This is a rectangular set and thus falls within the framework of QPs. \\

    Finally, we give some discussion to when $\gn(a) = 0$ exactly. This is a probability zero event according to our statistical model, and so there is some expectation that it doesn't occur. Alternatively, we can simply insist that $\fn(a) = 0$ and optimize over the remaining terms.

\section{Additional Experiments}
We validate the use of the CCP by a brief experimental comparison to projected gradient descent. To evaluate these models, we will run these algorithms on an artificial example using an image. First, we regard each color channel as a signal on a 50 $\times$ 50 grid graph. Then, we artificially generate the uniform noise, independently for each pixel, to create our signal $\gn$. We then run, for each channel, the projected gradient algorithm as well as the Convex-Concave Procedure. For the initialization, we use a bit of chemistry: although the observed signal $\gn$ is of course feasible, we perturb it each coordinate by a small amount so that $\fn^0$ is strictly feasible (heuristically, this seems to eliminate spotting in the output). For each algorithm, we use a stopping criteria that the error change by no more than $10^{-7} \times \mathcal L(\fn^0)$ (we multiply by the initial loss to provide scale). Finally, a learning rate of $\gamma = 1$ is chosen for projected gradient.

Notice that this requires knowing the ground truth signal, so this is questionable in practice. But for the purposes of experimentation, since that estimation is not the focus, this will do. Both algorithms were run in python in the Yale Zoo. In order to solve the QP ~\ref{opt:qp}, the package CVXOPT is used. 

\begin{figure}[H]
    \centering
    \includegraphics[width=0.8\linewidth]{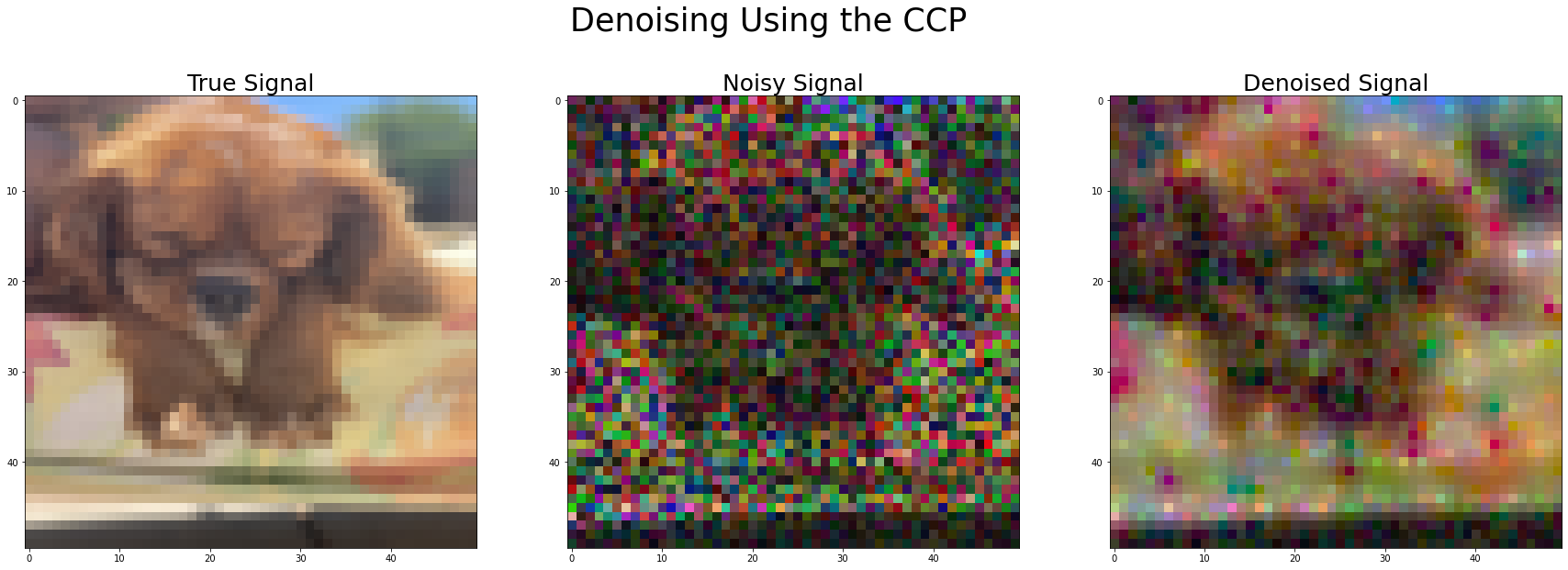}
    \includegraphics[width=0.8\linewidth]{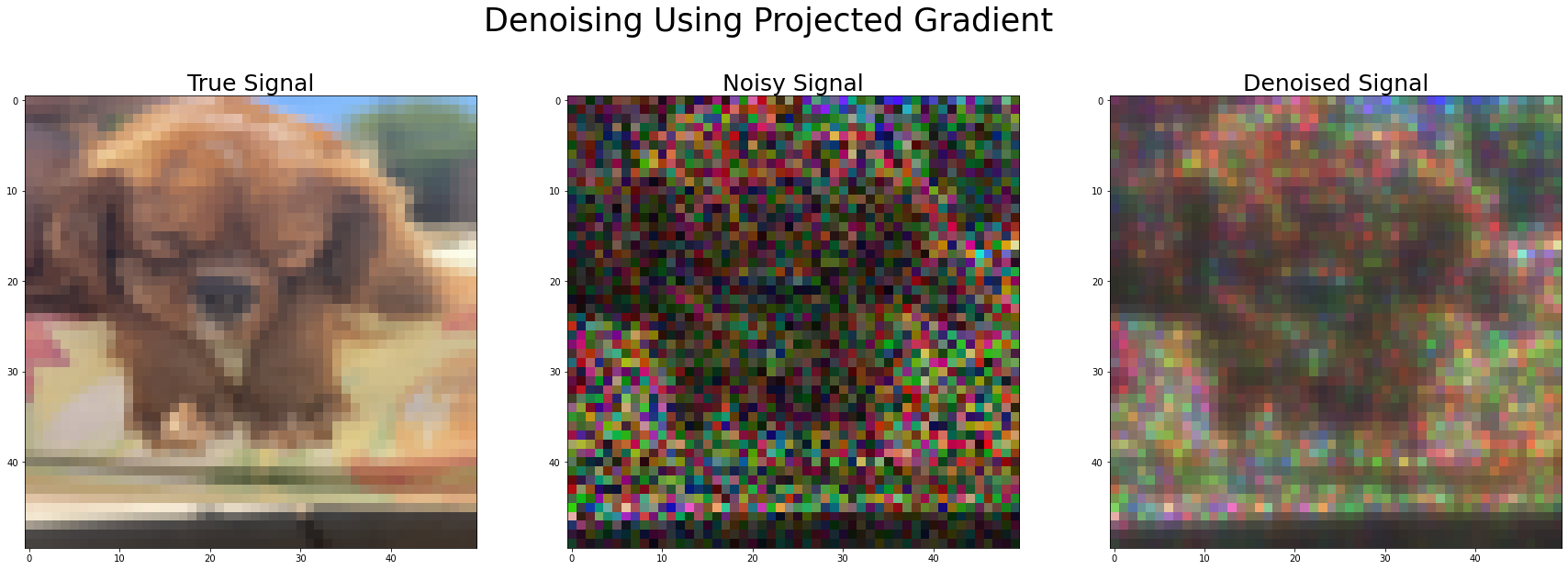}
    \caption{The original image and the randomly rescaled noisy image, followed by the output of the Convex-Concave Procedure. Below is the same experiment applied to the projected gradient method.}
    \label{fig:my_label}
\end{figure}

\begin{figure}[ht]
    \centering
    \includegraphics[width=0.4 \linewidth]{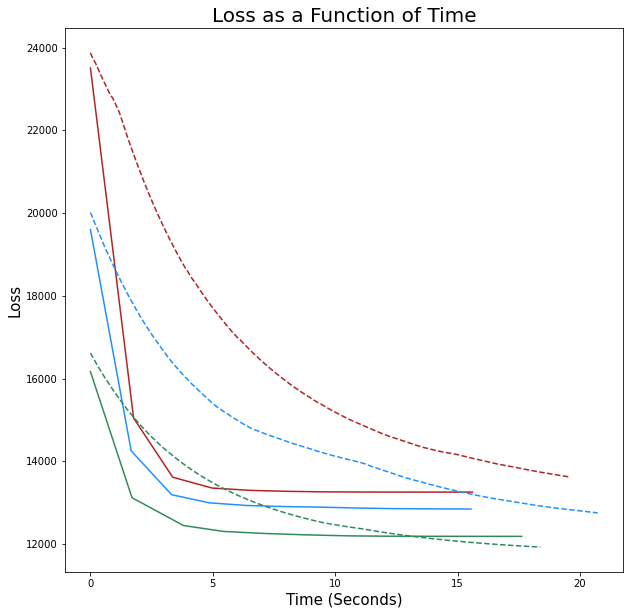}
    \caption{Value of the objective function by time. For each channel, the solid line gives the loss corresponding to the CCP at that time, while the dotted line gives the loss function corresponding to projected gradient descent.  For visual clarity, only the first 500 iterates of projected gradient are included (out of up to 1,950).}
    \label{fig:loss_curve}
\end{figure}

Note first that, theoretically, no algorithm would necessarily outperform the other. In fact, unless projected gradient were implemented with exact line search, it is not a guaranteed descent algorithm in the non-convex case. The CCP, of course, is, but we cannot make a statement about the value achieved or the rate of convergence.

\paragraph{Comparison} We compare the two algorithms in terms of runtime (both in terms of number of steps and total time), rate of decay, and accuracy. For all three channels, the CCP converged within 10 iterations (impressive)! Each iteration took 1.63 seconds on average to complete. In contrast, projected gradient descent took 1950, 1425, and 985 iterations to meet the same convergence criteria. While the iterations for the CCP are more involved than those for gradient descent, the CCP is still faster overall. We find that the CCP took 15.6, 15.5, and 17.6 seconds to converge, while projected gradient required 75.9, 54.5, and 37.5 seconds. \\ 

What is much more interesting is the rate of decay. From figure ~\ref{fig:loss_curve}, it is clear that the CCP  not only converges faster overall to its  final value, but it does so incredibly quickly compared to projected gradient, especially so for the early iterations. However, we also see that gradient descent is steadily decreasing at what appears to be an exponential rate. There is of course nothing inherent about the problem that guarantees this, although it is suggestive that the concavity due to $\log$ is dominated by the convexity of the quadratic form part of $\mathcal L$, and so our problem is ``nearly-convex.''

\vspace{0.5cm}
\begin{table}[H]
\small
\centering
\begin{tabular}{lrrrrrrrrr}
\toprule
{} &     True Obj &  Gr Obj. &      CCP Obj &      CCP Err &    Gr Err &  Gr Time &   CCP Time(s) &  Gr Time(s)  \\
\midrule
Red   &  13771.597 &      12718.518 &  13256.470 &  1298.041 &  2113.123 &   75.919 &  15.602 &       75.919  \\
Blue  &  13124.384 &      12249.471 &  12848.253 &  1182.434 &  1994.533 &   54.450 &  15.545 &       54.450  \\
Green &  12452.517 &      11747.526 &  12187.420 &  1053.541 &  1946.910 &   37.516 &  17.617 &       37.516  \\
\bottomrule
\end{tabular}
\caption{Objective function values, squared error from the true signal, and runtime associated with the projected gradient and CCP algorithm. The objective function associated to the true signal is provided as reference.}
\end{table}

Of course, the secondary question remains: how do the algorithms compare in accuracy? Here, we have two notions: final value of the objective function, and actual closeness to the ground truth signal. In all channels, the case is the same. Both algorithms pass the most natural benchmark: they are able to achieve a lower loss value than the ground truth. Therefore, differences from the true signal can be regarded as due to modeling errors or random chance, rather than poor search of the loss landscape. However, the final loss value for projected gradient is consistently lower than that for the CCP, so in this regard, projected gradient outperforms. Likewise, the $\arg\min$ from projected gradient has a lower squared error from the ground truth than the $\arg\min$ from the CCP. Given that projected gradient algorithm both outperforms the CCP in final loss and decreases smoothly, this is suggestive that the CCP, while a descent algorithm that converges fairly rapidly, is prone to getting stuck in regions of the loss landscape. It is also apparent in figure ~\ref{fig:loss_curve} that the CCP asymptotes to a sub-optimal loss, so this is not a consequence of early stopping.

\end{document}